\newcommand{\cosa}[1]{\mathrm{c}_{#1}}
\newcommand{\sina}[1]{\mathrm{s}_{#1}}
\newcommand{\vect}[1]{\boldsymbol{#1}}
\newcommand{\mat}[1]{\boldsymbol{#1}}
\newcommand{\diff}[2]{ \displaystyle \frac{\partial #1}{\partial #2}}
\newcommand{\diffT}[2]{ \left( \displaystyle \frac{\partial #1}{\partial #2} \right)^{T} }
\newcommand{\diffs}[3]{\frac{\partial^2 #1}{
		\ifx#2#3 
		\partial #2^2
		\else
		\partial #2 \partial #3
		\fi
}}
\newcommand{\norm}[1]{{\left \| {#1} \right \|}}
\newcommand{\R}{\mathbb{R}}
\newenvironment{carray}
{ \left( \begin{array}}
	{ \end{array} \right) }
\newcommand{\zerov}{\vect{0}}
\newcommand{\dv}{\vect{d}}
\newcommand{\tildv}{\tilde{\vect{d}}}
\newcommand{\fv}{\vect{f}}
\newcommand{\gv}{\vect{g}}
\newcommand{\hv}{\vect{h}}
\newcommand{\kv}{\vect{k}}
\newcommand{\pv}{\vect{p}}
\newcommand{\qv}{{\vect{q}}}
\newcommand{\dqv}{\dot{\vect{q}}}
\newcommand{\rv}{{\vect{r}}}
\newcommand{\uv}{\vect{u}}
\newcommand{\vv}{\vect{v}}
\newcommand{\xv}{\vect{x}}
\newcommand{\yv}{\vect{y}}
\newcommand{\dyv}{\dot{\vect{y}}}
\newcommand{\tv}{\vect{t}}
\newcommand{\alphav}{\vect{\alpha}}
\newcommand{\betav}{\vect{\beta}}
\newcommand{\gammav}{\vect{\gamma}}
\newcommand{\phiv}{\vect{\phi}}
\newcommand{\tauv}{\vect{\tau}}
\newcommand{\thetav}{\vect{\theta}}
\newcommand{\dthetav}{\dot{\vect{\theta}}}
\newcommand{\xiv}{\vect{\xi}}
\newcommand{\hxiv}{\hat{\vect{\xi}}}
\newcommand{\Phim}{\vect{\Phi}}
\newcommand{\IIm}{\mat{I}}
\newcommand{\zerom}{\mat{O}}
\newcommand{\Am}{\mat{A}}
\newcommand{\Jm}{\mat{J}}
\newcommand{\Km}{\mat{K}}
\newcommand{\Pm}{\mat{P}}
\newcommand{\Sm}{\mat{S}}
\newcommand{\Sigmam}{\mat{\Sigma}}
\newcommand{\pvone}[1]{ \vect{\nabla}_{#1} }
\newcommand{\pvtwo}[2]{ \vect{\nabla}_{#2}\left(#1\right) }
\newcommand{\parv}[2]{\expandafter\ifx\expandafter\relax
	\detokenize{#1}\relax\pvone{#2}\else\pvtwo{#1}{#2}\fi}
\newcommand{\Prm}{\mathrm{P}}
\newcommand{\Drm}{\mathrm{D}}
\newcommand{\drm}{\mathrm{d}}
\newcommand{\rb}[1]{\left( #1 \right)}
\newenvironment{sequation*}
    {\begin{equation*}\small
    }
    { 
    \end{equation*}
    }
\newtheorem{theorem}{Theorem}
\newtheorem{property}{Property}
\newtheorem{corollary}{Corollary}
\theoremstyle{definition}
\newtheorem*{remark}{Remark}
\newtheoremstyle{named}{}{}{\itshape}{}{\bfseries}{.}{.5em}{\thmnote{#3}}
\theoremstyle{named}
\newtheorem*{namedassumption}{}
\newcounter{example}
\newenvironment{example}[1][]{\refstepcounter{example}\par\medskip
   \noindent \textbf{Example~\theexample#1.} \rmfamily}{\hfill$\vartriangleleft$\newline\par}
\NewDocumentCommand{\ActM}{ O{{}} O{{}} O{(\qv)}}{\Am^{#2\!}_{#1}#3}
\newcommand{\percentwidth}{1}
\begin{document}

%Paper title
\title{\LARGE \bf
Input Decoupling of Lagrangian Systems \\ via Coordinate Transformation: \\
{\large General Characterization and its Application to Soft Robotics}\\
}

%Authors
\author{Pietro Pustina$^{1, 2}$, Cosimo Della Santina$^{2, 3}$, Frédéric Boyer$^{4}$, Alessandro De Luca$^{1}$, Federico Renda$^{5}$
\thanks{Manuscript received June, 12, 2023; revised October, 16, 2023; accepted January, 23, 2024. Date of publication \monthyeardate\today; date of current version \monthyeardate\today. This paper was recommended for publication by Editor P. Robuffo Giordano upon evaluation of the Reviewers' comments.}%
% Acks
\thanks{The research of Pietro Pustina and Federico Renda was financially supported by the US Office of Naval Research Global under Grant N62909-21-1-2033, and in part by the Khalifa University of Science and Technology under Grants CIRA-2020-074, RC1-2018-KUCARS. The research of Cosimo Della Santina was financially supported by the Horizon Europe Program from Project EMERGE under Grant 101070918. The research of Alessandro De Luca was financially supported by the PNRR MUR project under Grant PE0000013-FAIR. \textit{(Corresponding author: Pietro Pustina.)}}
\thanks{$^{1}$Department of Computer, Control and Management Engineering, Sapienza University of Rome, 00185 Rome, Italy.
    {\footnotesize pustina@diag.uniroma1.it, deluca@diag.uniroma1.it}}
\thanks{$^{2} $Department of Cognitive Robotics, Delft University of Technology, Delft, The Netherlands. 
        {\footnotesize c.dellasantina@tudelft.nl}}%
\thanks{$^{3} $Institute of Robotics and Mechatronics, German Aerospace Center (DLR), 82234 Oberpfaffenhofen, Germany.}%
\thanks{$^{4} $ Institute Mines Telecom Atlantique, 44307 Nantes, France.
    {\footnotesize frederic.boyer@imt-atlantique.fr}}%
\thanks{$^{5}$Khalifa University Center for Autonomous
Robotics System (KUCARS) and the Department of Mechanical and Nuclear Engineering, Khalifa University of Science and Technology, 127788 Abu Dhabi, UAE.
    {\footnotesize federico.renda@ku.ac.ae}}
\thanks{Digital Object Identifier (DOI): 10.1109/TRO.2024.3370089}
}

\markboth{IEEE Transactions on Robotics}%
{Pustina \MakeLowercase{\textit{et al.}}: Input Decoupling of Lagrangian Systems via Coordinate Transformation}

% Add the mark for Arxiv
\fancypagestyle{IEEEMark}{ 
\fancyhf{} 
\fancyfoot[L]{\footnotesize \copyright2024 IEEE. Personal use of this material is permitted.  Permission from IEEE must be obtained for all other uses, in any current or future media, including reprinting/republishing this material for advertising or promotional purposes, creating new collective works, for resale or redistribution to servers or lists, or reuse of any copyrighted component of this work in other works.} \renewcommand{\footrulewidth}{0pt} \renewcommand{\headrulewidth}{0pt} 
}

\maketitle
% Add the mark for Arxiv, uncomment this to remove
\thispagestyle{IEEEMark}

%%%%%%%%%%%%%%%%%%%%%%%%%%%%%%%%%%%%%%%%%%%%%%%%%%%%%%%%%%%%%%%%%%%%%%%%%%%%%%%%
\begin{abstract}
Suitable representations of dynamical systems can simplify their analysis and control. On this line of thought, this paper aims to answer the following question: \textit{Can a transformation of the generalized coordinates under which the actuators directly perform work on a subset of the configuration variables be found?} Not only we show that the answer to this question is \textit{yes}, but we also provide necessary and sufficient conditions. More specifically, we look for a representation of the configuration space such that the right-hand side of the dynamics in Euler-Lagrange form becomes $[\IIm \; \zerom]^{T}\uv$, being $\uv$ the system input. We identify a class of systems, called \textit{collocated}, for which this problem is solvable. Under mild conditions on the input matrix, a simple test is presented to verify whether a system is collocated or not. By exploiting power invariance, we provide necessary and sufficient conditions that a change of coordinates decouples the input channels if and only if the dynamics is collocated. In addition, we use the collocated form to derive novel controllers for damped underactuated mechanical systems. To demonstrate the theoretical findings, we consider several Lagrangian systems with a focus on continuum soft robots.
\end{abstract}
%
%%%%%%%%%%%%%%%%%%%%%%%%%%%%%%%%%%%%%%%%%%%%%%%%%%%%%%%%%%%%%%%%%%%%%%%%%%%%%%%%
\begin{IEEEkeywords}
Underactuated Robots; Dynamics; Motion Control; Modeling, Control, and Learning for Soft Robots.
\end{IEEEkeywords}
\section{Introduction}
\IEEEPARstart{E}{lectrical}, hydraulic, and mechanical systems, or their combinations, are Lagrangian systems that usually exhibit complex behavior. However, their physical nature displays special properties, such as symmetry and passivity, which have been exploited to solve many control problems~\cite{chen2010distributed, ortega2013passivity, loria2015observers, ge2019hierarchical, sun2020distributed}, otherwise difficult to address for generic nonlinear dynamics. To cope with their high nonlinearity and large number of degrees of freedom (DOF), representations with specific structures play a crucial role in simplifying analysis, as well as control design and synthesis. For example, coordinate transformations are often used to highlight some internal structure that simplify derivation of feedback controllers for robotic systems and prove their stability~\cite{giordano2019coordinated, yi2020path, mengacci2021motion, keppler2022underactuation}. 
\begin{figure}[t]
    \centering
    \includegraphics[width = \percentwidth\columnwidth]{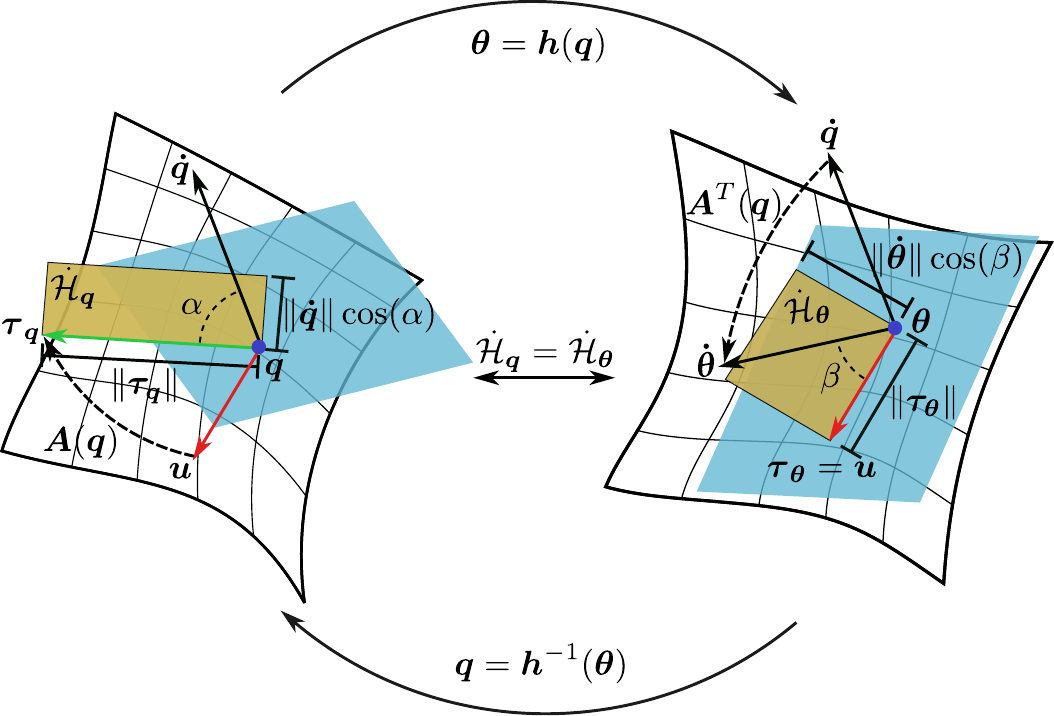}
    \caption{Graphical representation of the proposed change of coordinates addressing the input decoupling problem for a fully actuated Lagrangian system. In the $\qv$ coordinates, the input $\uv$ undergoes a nonlinear transformation through the actuation matrix $\ActM$ when performing work on $\qv$, i.e., $\tauv_{\qv} = \ActM \uv$. The proposed change of coordinates $\thetav = \hv(\qv)$ bends the configuration space so that each component of $\uv$ acts directly on one component of $\thetav$, namely $\tauv_{\thetav} = \uv$. The existence of this transformation is possible because of the conservation of power $\dot{\mathcal{H}}_{\qv}$, represented by the yellow area, under change of coordinates, i.e., $\dot{\mathcal{H}}_{\qv}=\dot{\mathcal{H}}_{\thetav}$.
    }
    \label{fig:coordinate_change}
\end{figure}
This paper considers the Input Decoupling (ID) problem for input-affine Lagrangian systems. In particular, given a Lagrangian system whose inputs enter the second-order equations of motion through a configuration-dependent actuation matrix, we study under which conditions on the definition of generalized coordinates each input affects one and only one equation of motion.
The coordinates solving the ID problem constitute a set of variables that simplify control design and synthesis because the input directly affects the equations of motion in a decentralized form. If the decoupling coordinates are regarded as system outputs, any control law implemented in these coordinates automatically decouples also the input-output channels.
For fully actuated or overactuated Lagrangian systems, i.e., when the number of independent inputs is equal or larger than the number of generalized coordinates, the ID problem can be solved by a configuration-dependent input transformation that inverts (or pseudoinverts) the actuation matrix. On the other hand, in the case of underactuated systems, a solution is available so far if the actuation matrix is constant and requires only a linear change of generalized coordinates~\cite{borja2022energy}. 

For multi-input-multi-output (MIMO) plants described in state space form, the input-output decoupling problem is solved by an inversion-based feedback controller, both in the linear~\cite{zhang2016multivariable} and nonlinear case~\cite{krener1973equivalence, brockett1978feedback, isidori1981nonlinear}. For this, a feedback transformation is needed that requires a state and input transformation together with a feedback action.
In particular, input-output decoupling is possible by a static state feedback if and only if the system has well defined vector relative degree, namely a nonsingular decoupling matrix~\cite{nijmeijer1990nonlinear, isidori1992nonlinear}. This result can be used to solve the ID problem for Lagrangian systems by taking the configuration variables (or a subset of them) as system outputs. In fact, in this way, the system has a vector relative degree, with all outputs having uniform relative degree two. It can be shown that, in case of underactuation, the necessary and sufficient condition is the involutivity of the distribution spanned by the columns of the actuation matrix~\cite{ortega2021pid}.
Unfortunately, involutivity is not easy to check, especially for high dimensional dynamics, and has to be evaluated case by case, see~\cite{wu2019adaptive, lazrak2018improved, mishra2018design, ding2022robust}. Furthermore, finding the input decoupling state variables in this case requires solving a system of nonlinear partial differential equations~\cite[Chap.~5]{isidori1992nonlinear}, which could be impractical for control synthesis.
Finally, in~\cite{skogestad2023transformed}, the authors have proposed input transformations for dynamic processes that achieve exact linearization and input decoupling, under complete model knowledge. However, the analysis is restricted to systems with equal number of states, inputs and outputs.

In this paper, we focus on Lagrangian systems and we show that the choice of particular coordinates, called \textit{actuation coordinates}, solves the ID problem without requiring a configuration-dependent transformation of the input nor a state-feedback. 
We derive necessary and sufficient conditions under which the actuation coordinates exist and constructively show that these coordinates solve the ID problem for fully actuated, overactuated, and underactuated Lagrangian systems. In particular, the unactuated coordinates remain arbitrary when the system is underactuated.

These results stem from power invariance under change of coordinates, as graphically illustrated in Fig.~\ref{fig:coordinate_change} for the fully actuated case. Remarkably, similar considerations hold also when the dynamics is underactuated. 
We apply the results to several mechanical examples as archetypal Lagrangian systems. In addition, we prove that robotic systems driven by thread-like actuators, such as inelastic tendons or thin fluidic chambers, always admit actuation coordinates. 
Our results have relevant consequences on the control of continuum soft robots and other underactuated mechanical systems. Indeed, recent control laws for planar underactuated soft robots~\cite{franco2021energy, caasenbrood2021energy, pustina2022feedback, borja2022energy, pustina2023psatid} generalize to all collocated mechanical systems with damping, such as soft robots moving in 3D. 
Interestingly, the energy-based regulator in~\cite{soleti2023energy} uses in fact actuation coordinates to decouple the equations of motion of a 3-DOF underactuated soft robotic system. 
Similarly, in flexible link robots, the actuation coordinates are the clamped angles at the base of each beam, which have been extensively used in control design~\cite{deluca2021flexible}.

The paper is organized as follows. In Sec.~\ref{sec:preliminaries}, we introduce the notation and formalize the ID problem. Section~\ref{sec:collocated lagrangian systems} defines the class of Lagrangian systems, called \textit{collocated}, for which the ID problem is solvable. Necessary and sufficient conditions for solving the ID problem are then derived for fully actuated or overactuated dynamics (Sec.~\ref{sec:fully actuated systems}), and for underactuated systems (Sec.~\ref{sec:underactuated systems}). 
In Sec.~\ref{sec:thread-like actuators}, we prove that thread-like actuators yield collocated mechanical systems. Section~\ref{sec:example} extends two control strategies derived for underactuated mechanical systems with constant actuation matrix to the collocated case, validating one of these controllers on a 3D tendon-driven underactuated soft robot. Finally, conclusions and future works are summarized in Sec.~\ref{sec:conclusions}. 
%
%
%%%%%%%%%%%%%%%%%%%%%%%%%%%%%%%%%%%%%%%%%%%%%%%%%%%%%%%%%%%%%%%%%%%%%%%%%%%%%%%
\section{Preliminaries}\label{sec:preliminaries}
%Table for the notation
\subsection{Notation}
We denote vectors and matrices with bold letters. 
Arguments of the functions are omitted when clear from the context. 
Table~\ref{tab:notation} presents the notation adopted in the paper. 
\begin{table}[t!]\caption{Nomenclature}
\centering
% between the caption and the table
\begin{tabular}{p{0.3\columnwidth} p{0.65\columnwidth} }
\toprule
\toprule
Symbol & Description\\
\toprule
$\R^{n}$ & Euclidean space of dimension $n$\\
$\mathbb{S}^{n}$ & Unit sphere of dimension $n$\\
$\R^{n \times m}$ & Space of $n \times m$ matrices over $\R$\\
$\mathcal{X}$ & Manifold of interest with $\mathcal{X} = \{ \mathcal{M}, \mathcal{N} \}$\\
$\R^{>0}$ & Positive real numbers $n$\\
$T_{\vv}\mathcal{X}$ & Tangent space of manifold $\mathcal{X}$ at $\vv \in \mathcal{X}$\\
$\mathcal{B}(\vv)$ & Neighbourhood of $\vv \in \mathcal{X}$ \\
$\mathfrak{se}(3)$ & Special Euclidean algebra of dimension $3$\\
$\mathfrak{so}(3)$ & Special orthogonal algebra of dimension $3$\\
$\IIm_{n} \in \R^{n \times n}$ & Identity matrix of dimension $n$\\
$\zerom_{n \times m}$ & Zero matrix of dimension $n \times m$\\
$\Pm > 0$ & Symmetric positive definite matrix\\
$\Sm_{i} \in \R^{n}$ & Column $i$ of matrix $\Sm \in \R^{n \times m}$\\
$S_{ij} \in \R$ & Element in row $i$ and column $j$ of $\Sm$\\
$[\vv]_{i} \in \R^{i}$ & Vector containing the first $i$ components of $\vv \in \R^{n}$, with $i \leq n$\\
$\norm{\vv}$ & Euclidean norm of $\vv$\\
$\tilde{\rv} \in \mathfrak{so}(3)$ & Skew symmetric matrix defined by $\rv \in \R^{3}$\\
$\hat \alphav \hspace{-2pt}  = \hspace{-2pt} \small \begin{carray}{@{\hspace{-2pt}}cc@{\hspace{-2pt}}}
    \tilde \betav & \gammav \\
    \zerov & 0
\end{carray} \hspace{-2pt} \in \hspace{-2pt} \mathfrak{se}(3)$ & Tensor representation of $\alphav \hspace{-2pt}= \hspace{-2pt}\small \rb{\betav^{T}\ \hspace{-2pt}\gammav^{T}}^{T}$ with $\betav, \gammav \in \R^3$\\
$\Jm_{\fv}(\xv)\hspace{-2pt} =\hspace{-2pt} \diff{\fv}{\xv} \hspace{-2pt} \in\hspace{-2pt} \mathbb{R}^{l \times h}$ & Jacobian of the vector function $\fv(\xv) : \mathbb{R}^{h} \hspace{-3pt} \rightarrow \hspace{-3pt} \mathbb{R}^{l}$\\
% $\cosa{x} (\sina{x})$ & Cosine (sine) function computed at $x$ \\
% $\mathrm{tanh}(x)$ & Hyperbolic tangent function computed at $x$\\
$\boldsymbol{\mathrm{tanh}}(\vv)$ & Vector obtained by applying $\mathrm{tanh}(\cdot)$ componentwise to $\vv$\\
% $[\alphav(\xv), \betav(\xv)]$ & Lie bracket between the vector fields $\alphav(\xv), \betav(\xv) \in \R^{n}$\\
\bottomrule
\bottomrule
\end{tabular}
\label{tab:notation}
\end{table}
\subsection{Dynamic model}
Let $\qv \in \mathcal{M}$ be the generalized coordinates of a dynamical system evolving on a $n$-dimensional smooth manifold $\mathcal{M}$ with Lagrangian $\mathcal{L}_{\qv}(\qv, \dqv)$. The system trajectories satisfy the Euler-Lagrange equations of motions
\begin{equation}\label{eq:lagrangian dynamics}
\left\{
\begin{array}{l}
         \displaystyle\frac{\mathrm{d}}{\mathrm{d}t}\diffT{\mathcal{L}_{\qv}(\qv, \dqv)}{\dqv} - \diffT{ \mathcal{L}_{\qv}(\qv, \dqv) }{\qv} = \tauv_{\qv}(\qv, \uv),\\
         \tauv_{\qv}(\qv, \uv) = \ActM\uv,
\end{array}
\right.
\end{equation}
where $\uv \in \R^{m}$ are the available actuation inputs, $\ActM \in \R^{n \times m}$ is the actuation matrix, and $\tauv_{\qv}(\qv, \uv) \in \mathrm{Im}(\ActM)$ collects the generalized forces performing work on $\qv$.
For all $\qv \in \mathcal{M}$, we assume that $\ActM$ is a full-rank matrix, i.e., $r = \mathrm{rank}(\ActM) = \min(m, n)$. 
When the dynamics is fully actuated ($m = n$) or underactuated ($m < n$), this is equivalent to asking that the actuation channels are all independent. On the other hand, if~\eqref{eq:lagrangian dynamics} is overactuated ($m > n$), we assume that there are exactly $n$ independent inputs. 
Note that, for the following derivations, when $r = m^{*} < \min(m, n)$ one can consistently discard $m-m^{*}$ linearly dependent columns of $\ActM$ and consider the dynamics as underactuated. 

In Appendix~\ref{appendix:properties}, we recall two basic properties of Lagrangian systems used in the following results. 
\subsection{Problem statement}\label{section:problem statement}
We look for a change of coordinates $\thetav = \hv(\qv) $ from $\mathcal{B}(\qv) \subset \mathcal{M}$ to $\mathcal{N}$ where each of the first $r$ equations of motion in~\eqref{eq:lagrangian dynamics} is affected by one, and only one, independent actuator input, i.e., the right-hand side of the transformed equations of motion (see Property~\ref{property:change of coordinates} in Appendix~\ref{appendix:properties}) takes the form
\begin{equation}\label{eq:tau theta}
    % \ActM[][][(\thetav)] = \begin{carray}{c}
    %     \IIm_{r}\\
    %     \zerom_{n-r \times r}
    % \end{carray}.
    \tauv_{\thetav}(\thetav, \uv) = \ActM[\thetav][][(\thetav)]\uv = \begin{carray}{c}
        \IIm_{r}\\
        \zerom_{n-r \times r}
    \end{carray} \uv.
\end{equation}
We refer to such problem as the~\textit{Input Decoupling (ID) problem} for the Lagrangian dynamics~\eqref{eq:lagrangian dynamics}. If a solution exists, then we say that~\eqref{eq:lagrangian dynamics} admits a~\textit{collocated form}. Note that~\eqref{eq:tau theta} covers only the fully- and underactuated cases. When the dynamics is overactuated, i.e., $m > n$, it is impossible to obtain~\eqref{eq:tau theta} because only $r = n$ input channels can be decoupled, and the remaining $m-n$ inputs will affect the dynamics through a configuration-dependent actuation matrix. 
%
% In the following, we show that for a class of Lagrangian systems, a change of coordinates is sufficient to decouple all the input channels. 
\section{Collocated Lagrangian Systems}\label{sec:collocated lagrangian systems}
In this section, we characterize a new class of Lagrangian systems, which we call \textit{collocated} because only such Lagrangian dynamics admit a collocated form under a change of generalized coordinates. In addition, a set of coordinates that solve the ID problem come for free without further system analysis. 

To this end, we will exploit a concept that is known as the \emph{passive output} in the context of passivity-based control~\cite{tsolakis2021distributed}. 
Consider the following vector function linear in the velocity
\begin{equation*}\label{eq:passive output}
    \dyv = \Am^{T}(\qv) \dqv,
\end{equation*}
which is called the passive output because~\eqref{eq:lagrangian dynamics} is passive with respect to the pair $(\uv, \dyv)$, with the storage function being the system Hamiltonian (see Appendix~\ref{appendix:properties}).
We will assume that~\eqref{eq:lagrangian dynamics} has $\dyv$ integrable, i.e., 
\begin{namedassumption}[Integrability assumption]\label{assumption:actuation coordinates}
For all $\qv \in \mathcal{M}$, there exists a function $\gv(\qv) : \mathcal{M} \rightarrow \R^{m}$ such that 
\begin{equation}\label{eq:integrability assumption}
    \Jm_{\gv}(\qv) = \diff{\gv}{\qv} = \Am^{T}(\qv).
\end{equation}
\end{namedassumption}
If the passive output is integrable, then we say that the Lagrangian system~\eqref{eq:lagrangian dynamics} is collocated because it admits a collocated form as defined in Section~\ref{section:problem statement}.
Furthermore, we define $\yv = \gv(\qv)$ as \emph{actuation coordinates} because, in such coordinates, $\uv$ acts directly on the equations of motion according to~\eqref{eq:tau theta}.
\begin{remark}
Each component of $\gv(\qv)$ is defined up to a constant since any function $\bar{\gv}(\qv) = \gv(\qv) + \kv$, with $\kv \in \R^{m}$, satisfies the condition $\Jm_{\Bar{\gv}} = \Jm_{\gv}$.  
\end{remark}
The integrability assumption requires each column of $\ActM$ to be the gradient of a scalar function of the configuration variables. If $\ActM$ is constant, i.e., $\ActM = \ActM[][][]$, then $\gv(\qv) = \Am^{T}\qv$.
More in general, when the column $\Am_{i}(\qv)$ has continuous partial derivatives, $\dot{y}_{i} = \ActM[i][T]\dqv$ is integrable~\cite[Chap.~2]{do1998differential} if and only if 
\begin{equation}\label{eq:integrability condition}
    \diff{A_{ji}}{q_{k}} = \diff{A_{ki}}{q_{j}}; \quad \forall j,k \in \{1, \cdots, n\}.
\end{equation}
Note that this condition is equivalent to asking that, when $u_i$ is constant, the generalized work done by $u_{i}$ on $\qv$ does not depend on the system trajectories but only on the initial and final configurations $\qv_a$ and $\qv_b$, respectively, i.e.,  
\begin{equation*}
    W_{u_{i}}(\qv) := \int^{\qv_{b}}_{\qv_{a}} u_{i} \ActM[i][T] d\qv = u_{i}\left[g_{i}(\qv_{b}) - g_{i}(\qv_{a})\right], 
\end{equation*}
where the last equality follows from the Gradient Theorem~\cite[Prop.~1, Chap.~2]{do1998differential}. 
In other words, $P_{i}(u_{i}, \qv) := u_{i}g_{i}(\qv)$ plays the role of a potential energy for the dynamics. If the actuation matrix is obtained using a differential formalism, such as the virtual works principle, it is reasonable to expect--although without any guarantee--that the integrability holds due to the inherent differentiation involved. It is also worth observing the following.
\begin{remark}
    If $\ActM[][T]$ is integrable according to~\eqref{eq:integrability assumption}, then the orthogonal complement to the co-distribution spanned by $\ActM[][T]$ satisfies the Frobenius theorem.
    % i.e., 
    % \begin{equation}
    %     \Delta = \rb{ \mathrm{span}(\ActM[][T]) }^{\perp},
    % \end{equation}
    % is involutive. 
    However, in general, the inverse implication does not hold without also an input transformation. 
\end{remark}

Even when~\eqref{eq:integrability condition} is satisfied, it could be challenging to integrate the passive output $\dyv$ in closed form. Nonetheless, it is always possible to perform the numerical integration online based on the measure of $\qv$ and $\dqv$. Assuming an exact knowledge of $\ActM$ and neglecting integration errors, there is formally no difference in having $\yv$ in closed form or computing it online. Furthermore, in many cases, the numerical integration should not be necessary. This is because the actuation coordinates are inherently related to the system inputs and should be easily measurable.

In some cases, the integrability may come directly as a consequence of the physical nature of the system, as illustrated in the following.  
\begin{example}[~(Cartesian forces on a robot)]
Inspired by~\cite{deluca1998steering}, consider a manipulator with $n$-DOF subject to $m$ external forces applied on its structure. Assume that each force $\fv_{i}$ changes its magnitude over time but keeps the same direction in the global Cartesian frame so that
\begin{equation*}
    \fv_{i} = \dv_{\fv, i} u_{i}; \quad i \in \{1, \cdots, m\},
\end{equation*}
with $\norm{\dv_{\fv, i}} = 1$, and where $\dv_{\fv, i} \in \R^{3}$ represents the direction of $\fv_i$ and $u_{i} \in \R^{>0}$ its magnitude. If $\pv_{i}(\qv) \in \R^{3}$ denotes the point of application of $\fv_{i}$ in the global frame, then the effect of the force in the dynamics is 
\begin{equation*}
    \tau_{\qv, i} = \Jm^{T}_{\pv_i}(\qv)\fv_{i} = \Jm^{T}_{\pv_i}(\qv) \dv_{\fv, i} u_{i} = \Am_{i}(\qv)u_{i}. 
\end{equation*}
Thus, one can integrate $\Am_{i}(\qv)$ as $g_{i}(\qv) = \dv_{\fv, i}^{T}\pv_{i}(\qv)$.
\end{example}
On the other hand, the integrability conditions may not hold even for elementary dynamics. 
% However, a pre-feedback action can enforce such a requirement, as illustrated later in Example~\ref{example:vtol}. 
\begin{example}[~(Geostationary satellite)]
\begin{figure}
    \centering
    \includegraphics[width = 0.7\percentwidth\columnwidth]{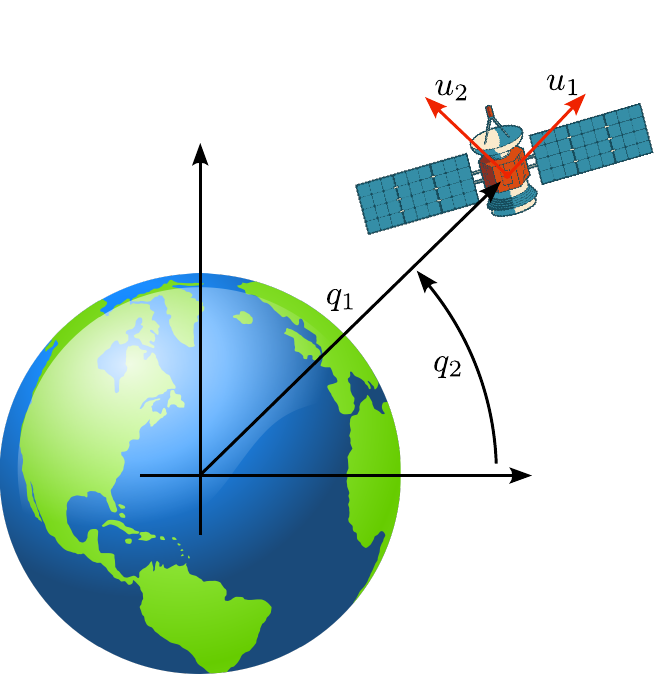}
    \caption{A geostationary satellite actuated by a normal force $u_1$ and a tangential force $u_2$. The body configuration is described by the distance $q_1$ from the Earth center and the angle $q_2$ with respect to the horizontal axis. Only the normal force is collocated because it performs work directly and only on $q_1$.}
    \label{fig:example:satellite}
\end{figure}
Consider a geostationary satellite orbiting around the Earth in a plane. The configuration $\qv = (q_1\,\,q_2)^{T}$ describes its motion, being $q_1$ the distance from the Earth center and $q_2$ the angle with respect to a reference axis, as shown in Fig.~\ref{fig:example:satellite}. The satellite is controlled by a normal and tangential force, denoted as $u_{1}$ and $u_{2}$, respectively. 
The application of the Lagrangian formalism yields
\begin{equation*}
    \tauv_{\qv} = \ActM \uv = \begin{carray}{cc}
        1 & 0\\
        0 & q_1
    \end{carray}\uv.
\end{equation*}
The first column of $\ActM$ is already in the collocated form because $u_{1}$ performs work only on $q_1$. However, the second column of $\ActM$ is non-integrable because
\begin{equation*}
    \diff{A_{12}}{q_{2}} = 0 \neq 1 = \diff{A_{22}}{q_{1}}.
\end{equation*}
Indeed, the generalized work done on $\qv$ by the tangential force $u_2$ depends on the time evolution of $q_{1}$.
\end{example}
In the following, we consider the three different actuation scenarios, namely fully actuated, overactuated and underactuated dynamics.  
\section{Change of Coordinates\\for Fully Actuated Systems}~\label{sec:fully actuated systems}
In this section, we show that when the system is fully actuated, the existence of $n$ actuator coordinates is necessary and sufficient to bring the dynamics to the collocated form. This case allows illustrating the results in the most simple scenario, where an input transformation is sufficient to solve the ID problem. However, the same arguments will be used also for underactuated systems.
\begin{theorem}\label{theorem:fully actuated}
Suppose the system is fully actuated, i.e., $m = n$. There exists a change of coordinates $\thetav = \hv(\qv) : \mathcal{B}(\qv) \rightarrow \mathcal{N}$ such that~\eqref{eq:lagrangian dynamics} takes the form
\begin{equation}\label{eq:coordinate change:fully actuated eom}
    \frac{\mathrm{d}}{\mathrm{d}t}\diffT{\mathcal{L}_{\thetav}(\thetav, \dthetav)}{\dthetav} - \diffT{ \mathcal{L}_{\thetav}(\thetav, \dthetav) }{\thetav} = \uv,
\end{equation}
i.e., $\ActM[\thetav][][(\thetav)] = \IIm_{n}$, if and only if the integrability assumption~\eqref{eq:integrability assumption} holds.
Let $\gv(\qv)$ be the integral of $\ActM[][T]\dqv$. Then, a possible choice for $\thetav$ is $\thetav = \gv(\qv)$. 
\end{theorem}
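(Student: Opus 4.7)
The plan is to exploit the transformation law for generalized forces under a change of coordinates, which is the content of Property~\ref{property:change of coordinates} referenced in the excerpt: under $\thetav = \hv(\qv)$ with $\hv$ a local diffeomorphism, the forces on the right-hand side of~\eqref{eq:lagrangian dynamics} transform as $\tauv_{\thetav} = \Jm_{\hv}^{-T}(\qv)\,\tauv_{\qv}$. Since $m=n$, the actuation matrix $\Am(\qv)$ is square and, by the full-rank assumption, invertible everywhere. This reduces the theorem to an algebraic identification of Jacobians, which I would handle by proving the two implications in sequence.

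For sufficiency, I would assume the integrability assumption and define $\thetav = \gv(\qv)$, the primitive of $\Am^{T}(\qv)\dqv$. Since $\Jm_{\gv} = \Am^{T}$ is invertible on $\mathcal{B}(\qv)$, the inverse function theorem guarantees that $\gv$ is a local diffeomorphism onto some neighborhood $\mathcal{N} \subset \R^{n}$. Applying the transformation law gives
\begin{equation*}
\tauv_{\thetav} \;=\; \Jm_{\gv}^{-T}\,\Am\,\uv \;=\; (\Am^{T})^{-T}\Am\,\uv \;=\; \Am^{-1}\Am\,\uv \;=\; \uv,
\end{equation*}
so that $\ActM[\thetav][][(\thetav)] = \IIm_{n}$ and~\eqref{eq:coordinate change:fully actuated eom} follows by rewriting the Euler-Lagrange equations in the new coordinates using the transformed Lagrangian $\mathcal{L}_{\thetav}(\thetav,\dthetav) = \mathcal{L}_{\qv}(\hv^{-1}(\thetav), \Jm_{\hv^{-1}}(\thetav)\dthetav)$.

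For necessity, I would assume the existence of a local diffeomorphism $\hv$ producing $\ActM[\thetav][][(\thetav)] = \IIm_{n}$. Again by the transformation law, $\Am(\qv)\uv = \tauv_{\qv} = \Jm_{\hv}^{T}\,\tauv_{\thetav} = \Jm_{\hv}^{T}\,\uv$ for every admissible $\uv$, so that $\Jm_{\hv}(\qv) = \Am^{T}(\qv)$. This identity exhibits $\hv$ itself as a primitive of $\Am^{T}\dqv$, so the integrability assumption~\eqref{eq:integrability assumption} is satisfied with $\gv = \hv$ (up to an additive constant, consistent with the remark after Assumption~1). As a by-product this shows that $\thetav = \gv(\qv)$ is indeed a valid choice, since any other admissible $\hv$ differs from $\gv$ only by a constant.

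The only subtle points, rather than obstacles, are the following. First, one must verify that $\gv$ is genuinely a change of coordinates: this is where invertibility of $\Am$ (guaranteed only in the fully actuated case) is used, and also why the statement is local, i.e., restricted to a neighborhood $\mathcal{B}(\qv)$. Second, strict care is needed about the direction of the Jacobian transpose, since $\Jm_{\hv}^{-T}$ (not $\Jm_{\hv}^{T}$) is the factor that maps forces in $\qv$-coordinates to forces in $\thetav$-coordinates; this is what makes $\Am^{T}$—rather than $\Am$—be the object that must be exact. Both the fully actuated statement and, later, its underactuated counterpart hinge on this fact.
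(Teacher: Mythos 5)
Your proof is correct, and it reaches the same key identity as the paper ($\Jm^{T}_{\hv}(\qv)=\ActM$ for necessity, $\tauv_{\thetav}=\uv$ for sufficiency) by a slightly different route. The paper's main-text proof deliberately works from power invariance: it writes $\dqv^{T}\bigl(\Jm^{T}_{\gv}(\qv)\tauv_{\thetav}-\ActM\uv\bigr)=0$, argues that this holds for all $\dqv\in T_{\qv}\mathcal{M}$ to strip the quadratic form, and only then solves the resulting homogeneous linear system using nonsingularity of $\ActM$; for necessity it likewise quantifies over $\uv$ by inserting the canonical basis vectors. You instead invoke the force-covariance formula $\tauv_{\thetav}=\Jm^{-T}_{\hv}\tauv_{\qv}$ from Property~\ref{property:change of coordinates} directly and reduce everything to the one-line computation $(\Am^{T})^{-T}\Am\uv=\uv$. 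The two are logically equivalent---the covariance formula is itself the content that the paper's quantifier argument re-derives---and the paper explicitly acknowledges your algebraic route in Appendix~\ref{appendix:alternative proof}, where it gives exactly this style of proof for Theorem~\ref{theorem:underactuated} and notes that it applies to Theorem~\ref{theorem:fully actuated} as well. What the paper's version buys is the physical narrative (input decoupling as a consequence of conservation of power under coordinate change, cf.~Fig.~\ref{fig:coordinate_change}); what yours buys is brevity and a cleaner isolation of where invertibility of $\Am$ and the transpose direction enter. Your closing remarks on locality via the inverse function theorem and on uniqueness of $\hv$ up to an additive constant are correct and slightly more explicit than the paper's own treatment.
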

\begin{proof}
Under the integrability assumption, $\thetav = \gv(\qv)$ defines a change of coordinates because its Jacobian $\Jm_{\gv}(\qv) = \Am^{T\!}(\qv)$ has rank $n$ at $\qv$.

Since the generalized power is coordinate invariant (Property~\ref{property:change of coordinates} in Appendix~\ref{appendix:properties}), it follows
\begin{equation*}
    \dthetav^{T}\tauv_{\thetav} = \dqv^{T}\tauv_{\qv}.
\end{equation*}
Noting that $\dthetav = \Jm_{\gv}(\qv)\dqv$ and using $\tauv_{\qv} = \ActM \uv$, the above equation rewrites as
\begin{equation*}
    \dqv^{T}\Jm^{T}_{\gv}(\qv) \tauv_{\thetav} = \dqv^{T} \ActM\uv,
\end{equation*}
or, equivalently,
\begin{equation}\label{eq:proof:fully actuated: power balance}
    \dqv^{T}\rb{ \Jm^{T}_{\gv}(\qv) \tauv_{\thetav} - \ActM\uv } = 0.
\end{equation}
Since~\eqref{eq:proof:fully actuated: power balance} holds for all $\dqv \in T_{\qv}\mathcal{M}$, it follows that
\begin{equation*}
    \Jm^{T}_{\gv}(\qv) \tauv_{\thetav} - \ActM\uv = \zerov.
\end{equation*}
Furthermore, $\Jm_{\gv}(\qv) = \Am^{T}(\qv)$ leads to
\begin{equation*}
    \ActM \rb{\tauv_{\thetav} - \uv} = \zerov.
\end{equation*}
The above equation defines a homogeneous linear system in the unknown $\tauv_{\thetav} - \uv$, which admits the unique solution $\tauv_{\thetav} = \uv$ since $\ActM$ is nonsingular, thus yielding the sufficiency of~\eqref{eq:coordinate change:fully actuated eom}.

As for the necessity, suppose that a change of coordinates $\thetav = \hv(\qv)$ exists such that~\eqref{eq:coordinate change:fully actuated eom} holds. Property~\ref{property:change of coordinates} implies that, for all $\dqv \in T_{\qv}\mathcal{M}$,
\begin{equation*}
    \dqv^{T} \rb{\Jm^{T}_{\hv}(\qv) - \ActM } \uv = 0,
\end{equation*}
leading to
\begin{equation*}
    \rb{\Jm^{T}_{\hv}(\qv) - \ActM } \uv = \zerov ; \quad \forall \uv\in \R^{n}.
\end{equation*}
If one chooses $\uv = (\IIm_{n})_{i};\ i \in \{ 1, \cdots, n \}$, then 
\begin{equation*}
    \rb{\Jm^{T}_{\hv}(\qv) - \ActM } \uv = (\Jm^{T}_{\hv}(\qv) - \ActM)_{i} = \zerov.
\end{equation*}
Thus, it holds $\Jm^{T}_{\hv}(\qv) = \ActM$ and $\gv(\qv) = \hv(\qv)$.
\end{proof}
The following example illustrates the above result.
\begin{example}[~(Spring actuated mechanism)]
    Consider a planar mechanism with two passive revolute joints, having angles $q_1$ and $q_2$ so that $\qv = (q_1\,\, q_2)^{T}$. A spring with stiffness $k_{i}$ is attached to the distal end of each link, whose length is $l_{i},\ i = 1, 2$.
    % whose position is
    % \begin{align}
    %     \pv_{l_1}(\qv) = l_{1}\begin{carray}{c}
    %         \cosa{q_1}\\
    %         \sina{q_1}
    %     \end{carray},\,\,
    %     \pv_{l_2}(\qv) = \pv_{l_1} + l_{2}\begin{carray}{c}
    %         \cosa{q_1 + q_2}\\
    %         \sina{q_1 + q_2}
    %     \end{carray}.
    % \end{align}
    % %
    The springs are also connected to two carts moving on linear rails under the forces $u_1$ and $u_2$, with reference to Fig.~\ref{fig:planar 2R spring}.  
    % The position of each cart in the base frame is
    % \begin{equation}
    %     \pv_{c_1} = \begin{carray}{c}
    %         d_{1}\\
    %         h_{1}
    %     \end{carray},\quad \pv_{c_2} = \begin{carray}{c}
    %         h_{2}\\
    %         d_{2}
    %     \end{carray}.
    % \end{equation}
    % where $\dv = \rb{ d_{1} \,\, d_{2} }^{T}$ serves as configuration vector for the rails system, and $h_{1}$ and $h_{2}$ are constant parameters. 
    %
    \begin{figure}
        \centering
        \includegraphics[width = 0.9\percentwidth\columnwidth]{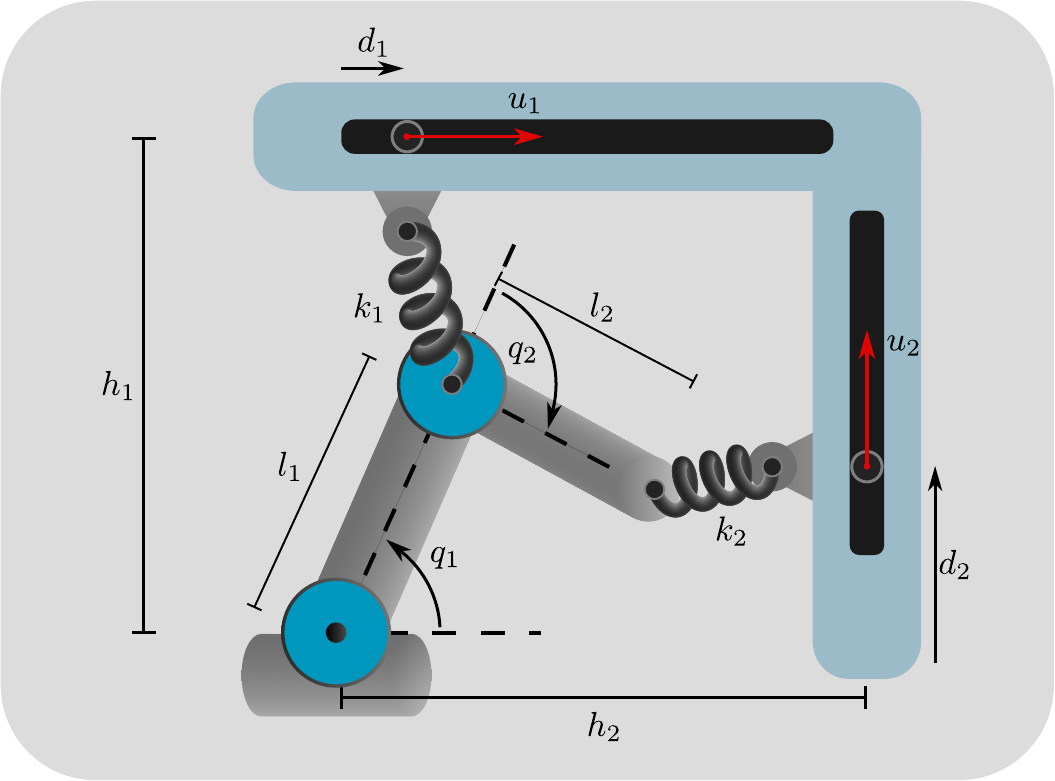}
        \caption{A planar mechanism with two rotational joints having configuration $\qv \in \mathbb{S}^{1} \times \mathbb{S}^{1}$. Positive (relative) rotations are counted counterclockwise. For $i = 1, 2$, the force $u_{i}$ actuates a cart coupled to the robot by a linear spring of stiffness $k_i$. The dynamics of the carts is negligible, and thus the forces $\uv$ act instantaneously on the mechanism.}
        \label{fig:planar 2R spring}
    \end{figure}
    Assuming that the dynamics of the carts is negligible, it can be shown that the actuator inputs $\uv$ directly affect the joint motion through the actuation matrix 
    \begin{equation*}
        \ActM = \begin{carray}{cc}
            -l_{1}\sina{1} & l_{1}\cosa{1} + l_{2}\cosa{12}\\
            0 & l_{2}\cosa{12}
        \end{carray},
    \end{equation*}
    where $\sina{i}(\cosa{i}) = \sin(q_i)(\cos(q_i))$ and $\sina{ij}(\cosa{ij}) = \sin(q_i + q_j)(\cos(q_i + q_j))$. 
    Outside of singularities $q_1 \in \{0, \pi\}$ and $q_1 + q_2 \in \{ \pm \pi/2\}$, $\mathrm{rank}(\ActM) = 2$ and the integrability test~\eqref{eq:integrability condition} is successful because
    \begin{align*}\small
        \diff{A_{11}}{q_2} &= \diff{\rb{-l_{1}\sina{1}}}{q_{2}} = 0 = \diff{A_{21}}{q_{1}},\\
        \diff{A_{21}}{q_2} &= \diff{\rb{l_{1}\cosa{1} + l_{2}\cosa{12}}}{q_{2}} = -l_{2}\sina{12}\\&= \diff{\rb{l_{2}\cosa{12}}}{q_{1}} = \diff{A_{22}}{q_1}.
    \end{align*}
    The passive output is integrable as
    \begin{equation*}
        \yv = \begin{carray}{c}
            l_{1}\cosa{1}\\
            l_{1}\sina{1} + l_{2}\sina{12}
        \end{carray}.
    \end{equation*}
    Note that $y_1$ is the $x$-coordinate of the position of the spring end attached to link $1$. Similarly, $y_2$ is the $y$-coordinate of the spring attached to link $2$.
    Indeed, the forces $u_{1}$ and $u_{2}$ perform work on the distal ends of the spring attached to the mechanism along these directions.  
\end{example}
\subsection{Overactuated case}
The previous result extends to overactuated systems, namely dynamics with more inputs than generalized coordinates. Thus, we have $r = n < m$. 
We partition $\ActM$, which is a wide matrix, as
\begin{equation}\label{eq:actuation matrix expansion}
    \ActM = \begin{carray}{cc}
        \ActM[a] & \ActM[o]
    \end{carray},
\end{equation}
where $\ActM[a] \in \R^{n \times n}$ and $\ActM[o] \in \R^{n \times (m-n)}$. 
Without loss of generality, we can have that $\mathrm{rank}\rb{\ActM[a]} = n$ and the integrability condition holds for $\ActM[a]$.
\begin{corollary}\label{corollary:overactuated}
If the system is overactuated, i.e., $r = n < m$ and the same hypotheses of Theorem~\ref{theorem:fully actuated} hold for $\ActM[a]$, then there exists a change of coordinates $\thetav = \hv(\qv):\mathcal{B}(\qv) \rightarrow \mathcal{N} $ such that~\eqref{eq:lagrangian dynamics} takes the form
\begin{equation}\label{eq:coordinate change:overactuated eom}
    \frac{\mathrm{d}}{\mathrm{d}t}\diffT{\mathcal{L}_{\thetav}(\thetav, \dthetav)}{\dthetav} - \diffT{ \mathcal{L}_{\thetav}(\thetav, \dthetav) }{\thetav} = \rb{ \IIm_{n} \quad \ActM[o, \thetav][][(\thetav)]} \uv,
\end{equation}
where
$$
\ActM[o, \thetav][][(\thetav)] = \ActM[a][-1][]\ActM[o][][\rb{\qv = \hv^{-1}(\thetav)}] \in \R^{n \times (m-n)}.
$$
If $\gv(\qv)$ is the integral of $\ActM[a][T]\dqv$, then $\thetav$ can be chosen as $\thetav = \gv(\qv)$. 
\end{corollary}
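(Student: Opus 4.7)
The plan is to reduce the overactuated case to Theorem~\ref{theorem:fully actuated} applied to the square block $\ActM[a]$, and to let the extra columns $\ActM[o]$ appear as a configuration-dependent residual acting on the surplus inputs. Since $\ActM[a][T]\dqv$ is integrable by hypothesis, I define $\gv(\qv)$ as its integral and set $\thetav = \gv(\qv)$. The Jacobian $\Jm_{\gv}(\qv) = \ActM[a][T]$ has full rank $n$ at $\qv$, so $\hv := \gv$ is a local diffeomorphism from $\mathcal{B}(\qv)$ onto $\mathcal{N}$, exactly as in Theorem~\ref{theorem:fully actuated}.

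Next, I split the input as $\uv = \rb{\uv_a^T \; \uv_o^T}^T$ with $\uv_a \in \R^n$ and $\uv_o \in \R^{m-n}$, so that $\tauv_{\qv} = \ActM[a]\uv_a + \ActM[o]\uv_o$, and invoke coordinate invariance of the generalized power (Property~\ref{property:change of coordinates} in Appendix~\ref{appendix:properties}). Using $\dthetav = \Jm_{\gv}(\qv)\dqv = \ActM[a][T]\dqv$, the power balance reads, for every $\dqv \in T_{\qv}\mathcal{M}$,
\[
\dqv^{T} \ActM[a] \tauv_{\thetav} \;=\; \dqv^{T} \rb{\ActM[a]\uv_a + \ActM[o]\uv_o}.
\]
Arbitrariness of $\dqv$ together with the nonsingularity of $\ActM[a]$ then yields $\tauv_{\thetav} = \uv_a + \ActM[a][-1]\ActM[o]\uv_o$. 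Pulling the coefficient back through $\qv = \hv^{-1}(\thetav)$ produces precisely $\ActM[o,\thetav][][(\thetav)]$ as defined in the statement, so $\tauv_{\thetav} = \rb{\IIm_n \;\; \ActM[o,\thetav][][(\thetav)]}\uv$, which, substituted into Property~\ref{property:change of coordinates}, rewrites~\eqref{eq:lagrangian dynamics} in the claimed form~\eqref{eq:coordinate change:overactuated eom}.

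The only point worth flagging is that no integrability condition is imposed on the $m-n$ surplus columns $\ActM[o]$: since at most $n$ independent channels can ever be decoupled when $m>n$, those contributions are unavoidably routed through $\ActM[a][-1]\ActM[o]$ and appear as a wide, configuration-dependent matrix, in agreement with the discussion following~\eqref{eq:tau theta}. Beyond this observation, the argument is a direct bookkeeping extension of the sufficiency part of Theorem~\ref{theorem:fully actuated}, and I do not anticipate any substantial obstacle; the corollary deliberately does not make a necessity claim, which is consistent with the fact that full collocation is unattainable in the overactuated regime.
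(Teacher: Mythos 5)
Your proof is correct and follows essentially the same route as the paper's: the choice $\thetav=\gv(\qv)$ with Jacobian $\ActM[a][T]$, power invariance, arbitrariness of $\dqv$, and nonsingularity of $\ActM[a]$ to isolate $\tauv_{\thetav}=\rb{\IIm_{n} \quad \ActM[a][-1]\ActM[o]}\uv$. The only difference is that the paper's proof also appends a converse argument (assuming a decoupling $\hv$ exists and deducing $\Jm^{T}_{\hv}(\qv)=\ActM[a]$, hence integrability of $n$ passive outputs), which the corollary as stated does not require and which you consciously and legitimately omit.
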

\begin{proof}
    Choosing again $\thetav = \gv(\qv)$ and following steps similar to those of the proof of Theorem~\ref{theorem:fully actuated}, one obtains
    \begin{equation*}
        \ActM[a] \tauv_{\thetav} = \ActM \uv.
    \end{equation*}
    %with $\qv = \gv^{-1}(\thetav)$.
    Expanding $\ActM$ into~\eqref{eq:actuation matrix expansion} and left-multiplying the above equation by $\ActM[a][-1]$ gives
    \begin{equation*}
        \begin{split}
            \tauv_{\thetav} &= \ActM[a][-1] \begin{carray}{cc}
                \ActM[a] & \ActM[o]
            \end{carray}\uv\\
            &= (\,\,
                \IIm_{n} \quad \underbrace{\ActM[a][-1]\ActM[o]}_{\ActM[o, \thetav][][(\thetav)]}
               \,\,)
                \uv,
        \end{split}
    \end{equation*}
    with $\qv = \gv^{-1}(\thetav)$.

    Now, assume a change of coordinates $\thetav = \hv(\qv)$ exists such that~\eqref{eq:coordinate change:overactuated eom} holds. 
    After some computations, power invariance leads to the algebraic system
    \begin{equation*}
    \begin{carray}{cc}
        \Jm^{T}_{\hv}(\qv)(\qv)-\ActM[a] & \Jm^{T}_{\hv}(\qv)\ActM[o, \thetav] - \ActM[o]
    \end{carray}\uv = \zerov,
    \end{equation*}
    which must hold for all $\uv \in \R^{m}$. By taking $\uv_{i} = (\IIm_{m})_{i};\ i \in \{ 1, \cdots, m \}$, it follows $\Jm^{T}_{\hv}(\qv) = \ActM[a]$ and $\ActM[o] = \Jm^{T}_{\hv}(\qv) \ActM[o, \thetav]$. Hence, at least $n$ passive outputs are integrable as $\yv = \hv(\qv)$. 
\end{proof}
Note that it is not possible, in general, to simplify the expression of both terms in the actuation matrix because there are too many input variables to be decoupled. 
\begin{example}[~(Tendon driven joint)]\label{example:finger}
    Consider the tendon driven finger of~\cite[Chap.~6.4]{murray1994mathematical} with $1$-DOF $q$ and two actuator inputs $\uv = (u_1 \,\, u_2)^{T}$, as sketched in Fig.~\ref{fig:example:finger}. 
    \begin{figure}
        \centering
        \includegraphics[width=\percentwidth\columnwidth]{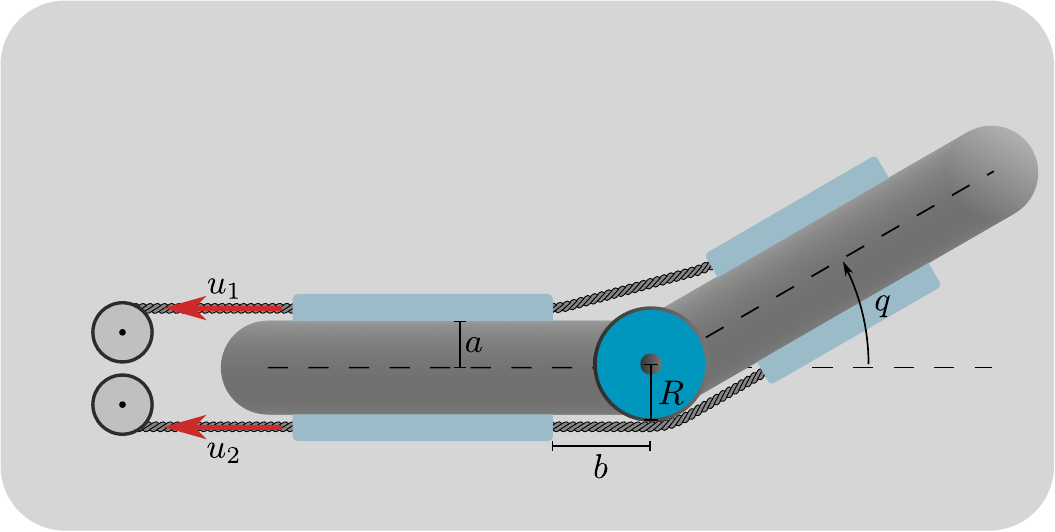}
        \caption{A tendon-driven rotational joint having a single configuration variable $q \in \mathbb{S}^{1}$. The cables tension $u_1$ and $u_2$ generate a torque at the joint.}
        \label{fig:example:finger}
    \end{figure}
    Assume for simplicity that the angle $q > 0$ (similar results hold for $q < 0$).
    The system input is $\tauv_{q} = \ActM[][][(q)]\uv$, where 
    \begin{equation*}
        \ActM[][][(q)] = \begin{carray}{@{\hspace{-1pt}}cc@{\hspace{-1pt}}}
            \displaystyle \sqrt{a^2 + b^2}\sin\rb{ \tan^{-1}\rb{\frac{a}{b}} + \frac{q}{2}} & -R
        \end{carray},
    \end{equation*}
    and $\uv$ collects the cable tensions. 
    Thus, the passive output is
    \begin{equation*}
        \dyv = \ActM[][T][(q)] \dot{q} = \begin{carray}{c}
            \displaystyle \sqrt{a^2 + b^2}\sin\rb{ \tan^{-1}\rb{\frac{a}{b}} + \frac{q}{2}} \dot{q} \\ -R \dot{q}
        \end{carray}.
    \end{equation*}
    Since the system is overactuated, there are two possible choices of the actuation coordinates out of singularities of $\ActM[][][(q)]$. When $q = \displaystyle - 2  \tan^{-1}\rb{\frac{a}{b}}$, the first column of $\ActM[][][(q)]$ becomes zero, and the first actuator does not have any effect on the motion.
    In this case, one can easily integrate $\dyv$ as
    \begin{equation*}
        \yv = \gv(q) = \begin{carray}{c}
            \displaystyle -2 \sqrt{a^2 + b^2}\cos\rb{ \tan^{-1}\rb{\frac{a}{b}} + \frac{q}{2}}\\
            -R q
        \end{carray}.
    \end{equation*}
    % One can also verify that $\yv$ has a physical interpretation. Indeed, it collects the elongation of each tendon with respect to the nominal extension. 
    % An obvious choice for the actuation coordinate is $\theta = R q$, that yields the following actuation matrix in the new coordinates
    % \begin{equation}
    %     \ActM[][][(\theta)] = \begin{carray}{cc}
    %         \displaystyle -\frac{\sqrt{a^2 + b^2}}{R}\sin\rb{ \tan^{-1}\rb{\frac{a}{b}} + \frac{\theta}{2R}} & 1
    %     \end{carray}. 
    % \end{equation}
    % However, outside of the singularities, another possible choice is $\theta = h(q) =\displaystyle 2 \sqrt{a^2 + b^2}\cos\rb{ \tan^{-1}\rb{\frac{a}{b}} + \frac{q}{2}}$. The function $h(q)$ can be locally inverted as
    % \begin{equation}
    %     q = h^{-1}(\theta) = -2\tan^{-1}\rb{\frac{a}{b}} + 2\cos^{-1}\rb{ \frac{\theta}{2\sqrt{a^2 + b^2}} }
    % \end{equation}
    % This gives
    % \begin{equation}
    %     J^{-1}_{h} = -\frac{1}{\sqrt{a^2 + b^2}\sin\rb{ \tan^{-1}\rb{\frac{a}{b}} + \frac{q}{2}}},
    % \end{equation}
    % and
    % \begin{equation}
    % \begin{split}
    %     \ActM[][][(\theta)] &= J^{-1}_{h}\ActM[][][(q)]_{q = h^{-1}(\theta)}\\
    %     &= \begin{carray}{cc}
    %         1 & \displaystyle -\frac{R}{\sqrt{a^2 + b^2}\sin\rb{ \tan^{-1}\rb{\frac{a}{b}} + \frac{q}{2}}} {}_{q = h^{-1}(\theta)}
    %     \end{carray}.
    % \end{split}
    % \end{equation}
    As expected, the choice of either the first or second component of $\yv$ yields dynamics in the form of~\eqref{eq:coordinate change:overactuated eom}.
    Note that $\yv$ is the tendon displacement with respect to the straight configuration. In fact, the system inputs perform work directly on the finger tendons.
\end{example}
\section{Change of Coordinates\\for Underactuated Systems}~\label{sec:underactuated systems}
We now focus on underactuated systems, specifically dynamics for which the number of independent actuator inputs is less than that of degrees of freedom. In the following, we show that also in this case the actuation coordinates solve the input decoupling problem. First, it is convenient to expand the actuation matrix as
\begin{equation}\label{eq:underactuated:actuation matrix expansion}
    \ActM = \begin{carray}{c}
        \ActM[a]\\
        \ActM[u]
    \end{carray},
\end{equation}
where $\ActM[a] \in \R^{m \times m}$ is nonsingular and $\ActM[u] \in \R^{(n-m) \times m}$. Note that such partition is always possible after reordering the linearly independent rows of $\ActM$. Furthermore, since $n > r = m$, $\dyv$ is an $m$-dimensional vector. 
\begin{theorem}\label{theorem:underactuated}
Assume that~\eqref{eq:lagrangian dynamics} is underactuated, i.e., $m < n$. The integrability assumption is a necessary and sufficient condition for a change of coordinates $\thetav = \hv(\qv) : \mathcal{B}(\qv) \rightarrow \mathcal{N}$ to exist such that~\eqref{eq:lagrangian dynamics} becomes
\begin{equation}\label{eq:coordinate change:underactuated eom}
    \frac{\mathrm{d}}{\mathrm{d}t}\diffT{\mathcal{L}_{\thetav}(\thetav, \dthetav)}{\dthetav} - \diffT{ \mathcal{L}_{\thetav}(\thetav, \dthetav) }{\thetav} = \begin{carray}{c}
        \uv \\ \zerov_{n-m}
    \end{carray},
\end{equation}
i.e., $\ActM[\thetav][][](\thetav)$ has the form
\begin{equation*}
    \ActM[\thetav][][(\thetav)] = \begin{carray}{c}
        \IIm_{n}\\
        \zerom_{(n-m) \times m}
    \end{carray}.
\end{equation*}
Let $\gv(\qv)$ be the integral of $\ActM[][T]\dqv$. Then, a possible choice of $\thetav$ is
    \begin{equation}\label{eq:change coordinates underactuated}
    \begin{split}
        \thetav 
        % &= \begin{carray}{c}
        %     \thetav_{a}\\
        %     \thetav_{u}
        % \end{carray}\\
        &= \begin{carray}{c}
            \gv(\qv)\\
            \zerov_{n-m}
        \end{carray} + \begin{carray}{cc}
            \zerom_{m \times m} & \zerom_{m \times (n-m)}\\
            \zerom_{(n - m) \times m} & \IIm_{n-m}
        \end{carray}\qv.
    \end{split}
    \end{equation}
\end{theorem}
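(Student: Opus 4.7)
The plan is to follow the same blueprint as Theorem~\ref{theorem:fully actuated}, adapted to the presence of $n-m$ unactuated degrees of freedom. Since the integral $\gv(\qv)$ supplies only $m$ new coordinates, the proposed map~\eqref{eq:change coordinates underactuated} completes them by copying over the last $n-m$ components of $\qv$. Nonsingularity of the associated Jacobian, together with power invariance (Property~\ref{property:change of coordinates}), will drive both directions of the proof.

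For sufficiency, I would first check that~\eqref{eq:change coordinates underactuated} defines a local diffeomorphism. Using the partition~\eqref{eq:underactuated:actuation matrix expansion} and the identity $\Jm_{\gv}(\qv) = \ActM[][T]$, the Jacobian of $\hv$ takes the block form
\begin{equation*}
\Jm_{\hv}(\qv) = \begin{carray}{cc}
\ActM[a][T] & \ActM[u][T] \\
\zerom_{(n-m) \times m} & \IIm_{n-m}
\end{carray},
\end{equation*}
whose determinant equals $\det(\ActM[a][T]) \neq 0$ by the hypothesis that $\ActM[a]$ is nonsingular. Then power invariance yields $\dqv^{T}(\Jm^{T}_{\hv}\tauv_{\thetav} - \ActM\uv) = 0$ for every $\dqv \in T_{\qv}\mathcal{M}$, hence $\Jm^{T}_{\hv}\tauv_{\thetav} = \ActM\uv$. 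Partitioning $\tauv_{\thetav}$ into its first $m$ and last $n-m$ components, say $\tauv_{1} \in \R^{m}$ and $\tauv_{2} \in \R^{n-m}$, the two block rows read $\ActM[a]\tauv_{1} = \ActM[a]\uv$ and $\ActM[u]\tauv_{1} + \tauv_{2} = \ActM[u]\uv$, which combined with the nonsingularity of $\ActM[a]$ force $\tauv_{1} = \uv$ and $\tauv_{2} = \zerov_{n-m}$, reproducing~\eqref{eq:coordinate change:underactuated eom}. For necessity, I would assume some $\thetav = \hv(\qv)$ realizes~\eqref{eq:coordinate change:underactuated eom} and apply power invariance to obtain $\Jm^{T}_{\hv}(\qv)(\uv^{T}\ \zerov^{T})^{T} = \ActM\uv$ for every $\uv \in \R^{m}$; probing with the canonical basis $\uv = (\IIm_{m})_{i}$ identifies the first $m$ columns of $\Jm^{T}_{\hv}(\qv)$ with the columns of $\ActM$, so that the first $m$ components of $\hv$ have gradient $\ActM[][T]$, which is precisely the integrability assumption~\eqref{eq:integrability assumption} and forces $[\hv(\qv)]_{m} = \gv(\qv)$ up to an additive constant.

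The main obstacle compared to the fully actuated case is structural rather than algebraic: one has to verify that completing $\gv(\qv)$ with some $n-m$ extra coordinates actually yields a nonsingular Jacobian. The block-triangular form above settles this immediately, and simultaneously explains why the unactuated block is essentially free---any completion whose added rows are linearly independent from those of $\ActM[][T]$ would serve equally well, matching the intuition that unactuated coordinates are not pinned down by the input decoupling requirement.
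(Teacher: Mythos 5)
Your proposal is correct and follows essentially the same route as the paper's proof: the same block-triangular Jacobian $\Jm_{\hv}(\qv)$ built from the partition~\eqref{eq:underactuated:actuation matrix expansion}, the same power-invariance argument reducing sufficiency to the nonsingular linear system $\Jm^{T}_{\hv}\tauv_{\thetav} = \ActM\uv$, and the same canonical-basis probing for necessity. The only cosmetic difference is that you solve the two block rows explicitly where the paper phrases the step as a homogeneous system with unique zero solution.
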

\begin{proof}
    The choice of $\thetav$ as given in~\eqref{eq:change coordinates underactuated}
    % Define $\thetav = \hv(\qv)$ as
    % \begin{equation}
    % \begin{split}
    %     \thetav &= \begin{carray}{c}
    %         \thetav_{a}\\
    %         \thetav_{u}
    %     \end{carray}\\
    %     &= \begin{carray}{c}
    %         \yv\\
    %         \zerov_{n-m}
    %     \end{carray} + \begin{carray}{cc}
    %         \zerom_{m \times m} & \zerom_{m \times n-m}\\
    %         \zerom_{n - m \times m} & \IIm_{n-m}
    %     \end{carray}\qv,
    % \end{split}
    % \end{equation}
    qualifies as a change of coordinates because its Jacobian
    \begin{equation}\label{eq:jacobian underactuated proof}
    \Jm_{\hv}(\qv) = \begin{carray}{cc}
        \ActM[a][T] & \ActM[u][T]\\
        \zerom_{(n-m) \times m} & \IIm_{n-m}
    \end{carray},
    \end{equation}
    is nonsingular at $\qv$.
    Power invariance and~\eqref{eq:jacobian underactuated proof} imply
    \begin{equation*}
    \begin{split}
        \dthetav^{T}\tauv_{\thetav} = \dqv^{T} \Jm^{T}_{\hv}(\qv) \tauv_{\thetav} = \dqv^{T} \ActM\uv = \dqv^{T}\tauv_{\qv}.
    \end{split}
    \end{equation*}
    Furthermore, being $\dqv$ arbitrary, it follows
    $
    \Jm^{T}_{\hv}(\qv) \tauv_{\thetav} = \ActM \uv,
    $
    which can be rewritten as
    \begin{equation}\label{eq:underactuated proof homogeneous system}
    \underbrace{
    \begin{carray}{cc}
        \ActM[a] & \zerom_{m \times (n - m)}\\
        \ActM[u] & \IIm_{n-m}
    \end{carray}}_{\Jm^{T}_{\hv}(\qv)}
    \begin{carray}{c}
        \tauv_{\thetav_a} - \uv\\
        \tauv_{\thetav_u}
    \end{carray} = \zerov,
    \end{equation}
    where we expanded $\tauv_{\thetav}$ into the two vectors $\tauv_{\thetav_a} \in \R^{m}$ and $\tauv_{\thetav_u} \in \R^{n-m}$ performing work on $\thetav_a$ and $\thetav_u$, respectively. 
    Equation~\eqref{eq:underactuated proof homogeneous system} describes a homogeneous linear system with the unique solution
    \begin{equation}\label{eq:underactuated proof input identity}
    \begin{carray}{c}
        \tauv_{\thetav_a} - \uv\\
        \tauv_{\thetav_u}
    \end{carray} = \zerov,
    \end{equation}
    being that $\Jm^{T}_{\hv}(\qv)$ is nonsingular.

    To prove the necessary part of the statement, suppose there exists $\thetav = \hv(\qv)$ such that~\eqref{eq:coordinate change:underactuated eom} holds and partition
    \begin{equation*}
        \Jm_{\hv}(\qv) = \begin{carray}{c}
            \Jm_{\hv_a}(\qv)\\
            \Jm_{\hv_u}(\qv)
        \end{carray},
    \end{equation*}
    with $\Jm_{\hv_a}(\qv) \in \R^{m \times n}$ and $\Jm_{\hv_u}(\qv) \in \R^{(n-m) \times n}$. Exploiting once again power invariance, we obtain, after some computations,
    \begin{equation*}
        \dqv^{T} \rb{ \Jm^{T}_{\hv_a}(\qv) - \ActM }  \uv = 0,
    \end{equation*}
    or, equivalently, 
    \begin{equation*}
        \Jm_{\hv_a}(\qv) = \ActM[][T].
    \end{equation*}
    Thus, the first $m$ components of $\hv(\qv)$ satisfy the integrability assumption. 
\end{proof}
In Appendix~\ref{appendix:alternative proof}, we report an alternative proof of the sufficient part of Theorem~\ref{theorem:underactuated}, which uses algebraic arguments instead of power invariance. 
\begin{remark}
There is no constraint on choosing the unactuated variables, except that the corresponding Jacobian is nonsingular. Indeed, the factorization given in~\eqref{eq:underactuated proof homogeneous system} holds independently of $\thetav_{u}$. In other words, Theorem~\ref{theorem:underactuated} does not rely on a specific choice of $\thetav_{u}$, which could be used to further simplify the structure of the equations of motion. 
\end{remark}
Note that the previous results can also be derived in a Hamiltonian formulation by considering the type 2 generating function~\cite{goldstein1980classical} $G_{2}(\qv, \tilde{\pv}) = \hv^{T}(\qv)\tilde{\pv}$, where $\tilde{\pv}$ is the momentum in the actuation coordinates.

We illustrate the application of Theorem~\ref{theorem:underactuated} on a soft robotic arm.
\begin{example}[~(Continuum soft robot)]~\label{example: PCC underactuated}
Consider a continuum soft robot discretized into two bodies, modeled under the piecewise constant curvature (PCC) hypothesis. Then, each body has three DOF, corresponding to its curvature $\kappa_{i}$, bending direction $\phi_{i}$ and elongation $\delta L_{i}$, $ i = 1, 2$, so that
\begin{equation*}
    \qv = \begin{carray}{cccccc}
        \kappa_{1} & \phi_{1} & \delta L_{1} & \kappa_{2} & \phi_{2} & \delta L_{2}
    \end{carray}^{T}.
\end{equation*}
Three tendons that run from the base to the tip actuate the robot. Each actuator is located at a distance $d \in \R^{+}$ from the backbone and rotated from the previous by $\SI{120}{\degree}$, as illustrated in Fig.~\ref{fig:example:soft robot underactuated}.
\begin{figure}[!t]
    \centering
    \includegraphics[width = 0.8\percentwidth\columnwidth]{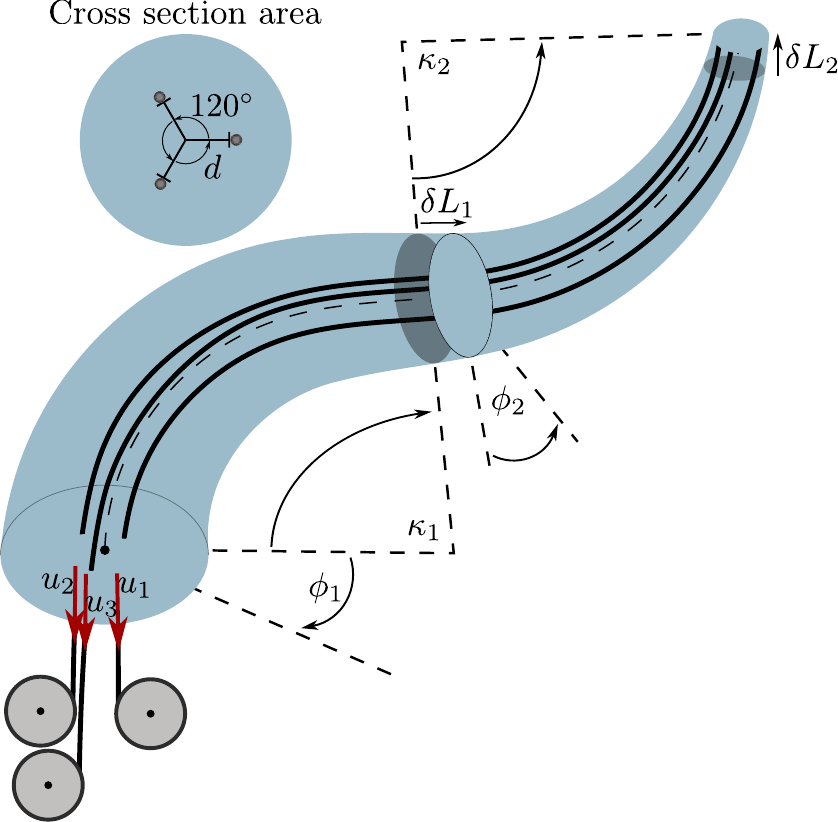}
    \caption{A continuum soft robot discretized into two bodies. Under the piecewise constant curvature assumption, the configuration of each body is described by its curvature $\kappa_{i}$, bending angle $\phi_{i}$ and elongation $\delta L_{i}$ for $i = 1, 2$. 
    Three tendons run from the base to the tip, and spatial motion is obtained by applying a suitable cable tension $\uv \in \R^{3}$. 
    }
    \label{fig:example:soft robot underactuated}
\end{figure}
%
% Note that a finer discretization of the soft body yields a more accurate description of its infinite-dimensional nature. However, it comes at the price of making the system underactuated. 
%
% Computing the Euler-Lagrange equations gives
% \begin{equation}\label{eq: PCC underactuated}
%     \Mm(\qv)\ddqv + \nv(\qv, \dqv) = \tauv_{\qv}(\qv, \uv),
% \end{equation}
% where $\Mm(\qv) > 0 \in \R^{6 \times 6}$ is the inertia matrix and $\nv(\qv, \dqv) \in \R^{6}$ collects all the uncontrolled forces. 
%
By exploiting the principle of virtual works, one can show that $\tauv_{\qv} = \ActM\uv$, where
\begin{equation}\label{eq:example PCC:actuation matrix}
    \arraycolsep=2.5pt\def\arraystretch{1.4}
    \ActM = \small \begin{carray}{ccc} -d \mathrm{c}_{2} & d \rb{\frac{1}{2}\mathrm{c}_{{2}} - \frac{\sqrt{3}}{2}\sina{2}}  & d \rb{ \frac{1}{2}\cosa{2} + \frac{\sqrt{3}}{2}\sina{2} } \\ 
    d q_{1} \mathrm{s}_{{2}} & -d q_{1} \rb{ \frac{1}{2} \sina{2} + \frac{\sqrt{3}}{2} \cosa{2} } & -d q_{1} \rb{ \frac{1}{2} \sina{2} - \frac{\sqrt{3}}{2} \cosa{2}}\\ 
    1 & 1 & 1\\ 
    -d \mathrm{c}_{{5}} & d \rb{\frac{1}{2}\mathrm{c}_{{5}} - \frac{\sqrt{3}}{2}\sina{5}} &d \rb{ \frac{1}{2}\cosa{5} + \frac{\sqrt{3}}{2}\sina{5} }\\ d q_{4} \mathrm{s}_{{5}} & -d q_{4} \rb{ \frac{1}{2} \sina{5} + \frac{\sqrt{3}}{2} \cosa{5}} & -d q_{4} \rb{ \frac{1}{2} \sina{5} - \frac{\sqrt{3}}{2} \cosa{5}}\\
    1 & 1 & 1 
    \end{carray},
\end{equation}
and $\uv \in \R^{3}$ collects the cables tension. 
It is easy to verify that $\mathrm{rank}(\ActM) = 3$ except when $q_{1} = q_{4} =~0$ and $q_{2} = q_{5} + k\pi$ with $k \in \mathbb{Z}$. %
In fact, control authority is lost when the arm is in the stretched configuration. However, this is an artifact due to the choice of the bending direction as generalized coordinate~\cite{della2020improved}. Thus, the system is underactuated with $m = 3$ and $n = 6$. 

It can be shown that (derivations are omitted for the sake of space) $\dyv = \ActM[][T]\dqv \in \R^{3}$ can be integrated as
\begin{equation*}\small
    \yv = \begin{carray}{c}
        q_{3} + q_{6} + d( q_{1}\mathrm{c}_{{2}} + q_{4}\mathrm{c}_{{5}})\\
        q_{3} + q_{6} - \frac{d}{2}( q_{1}\mathrm{c}_{{2}} + q_{4}\mathrm{c}_{{5}}) + \frac{\sqrt{3}}{2}d( q_{1}\mathrm{s}_{{2}} + q_{4}\mathrm{s}_{{5}})\\
        q_{3} + q_{6} - \frac{d}{2}( q_{1}\mathrm{c}_{{2}} + q_{4}\mathrm{c}_{{5}}) - \frac{\sqrt{3}}{2}d( q_{1}\mathrm{s}_{{2}} + q_{4}\mathrm{s}_{{5}})
    \end{carray}.
\end{equation*}
According to Theorem~\ref{theorem:underactuated}, the ID problem is solvable through a coordinate change having the form
\begin{equation}\label{eq:example PCC: coordinates change}
    \thetav = \begin{carray}{c}
        \thetav_{a}\\
        \thetav_{u}
    \end{carray} = \begin{carray}{c}
        \yv\\
        \thetav_{u}
    \end{carray},
\end{equation}
where $\thetav_{u} \in \R^{3}$ is any complement to $\yv$. For example, a possible choice is
\begin{equation*}
    \thetav_{u} = \begin{carray}{c}
        q_{3} + d q_{1}\mathrm{c}_{{2}}\\
        q_{3} - \frac{d}{2} q_{1}\mathrm{c}_{{2}} + 
        \frac{\sqrt{3}}{2}d q_{1}\mathrm{s}_{{2}}\\
        q_{3} - \frac{d}{2} q_{1}\mathrm{c}_{{2}} - \frac{\sqrt{3}}{2} d q_{1}\mathrm{s}_{{2}}
    \end{carray}.
\end{equation*}

Remarkably, the components of $\thetav_{a}$ correspond to the change of tendons length inside the arm. Similarly, $\thetav_{u}$ collects the tendons elongation in the first body only. 
% Note that to guarantee the invertibility of~\eqref{eq:example PCC: coordinates change}, we shall restrict $q_{2}, q_{5} \in [-\frac{\pi}{2};\ \frac{\pi}{2}]~\si{[\radian]}$. 

The reader can verify that $\Jm^{-T}_{\hv}$ takes the expression given by~\eqref{eq:example PCC:inverse transpose jacobian}, which yields
\begin{equation*}
    \tauv_{\thetav}(\thetav, \uv) = \Jm^{-T}_{\hv}(\qv)\ActM\uv =
    \begin{carray}{c}
        \IIm_{3}\\
        \zerom_{3 \times 3}
    \end{carray}\uv.
\end{equation*}
\end{example}
\begin{figure*}[ht!]
\begin{equation}\label{eq:example PCC:inverse transpose jacobian}
    \Jm^{-T}_{\hv} = 
        \begin{carray}{cccccc} 0 & 0 & 0 & -\frac{2}{3d}\mathrm{c}_{5} & \frac{2}{3dq_{4}}\mathrm{s}_{5} & \frac{1}{3}\\
        0 & 0 & 0 & \frac{2}{3d} \rb{ \frac{1}{2}\cosa{5} - \frac{\sqrt{3}}{2}\sina{5}}  & -\frac{2}{3dq_{4}} \rb{ \frac{1}{2} \sina{5} + \frac{\sqrt{3}}{2} \cosa{5} } & \frac{1}{3}\\
        0 & 0 & 0 & \frac{2}{3d} \rb{ \frac{1}{2}\cosa{5} + \frac{\sqrt{3}}{2}\sina{5}} & -\frac{2}{3dq_{4}} \rb{ \frac{1}{2} \sina{5} - \frac{\sqrt{3}}{2} \cosa{5} } & \frac{1}{3}\\
        -\frac{2}{3d}\mathrm{c}_{{2}} & \frac{2}{3dq_1}\mathrm{s}_{{2}} & \frac{1}{3} & \frac{2}{3d}\mathrm{c}_{{5}} & -\frac{2}{3dq_4}\mathrm{s}_{{5}} & -\frac{1}{3}\\
        \frac{2}{3d} \rb{ \frac{1}{2}\cosa{2} - \frac{\sqrt{3}}{2} \sina{2}} & -\frac{2}{3d} \rb{ \frac{1}{2}\cosa{2} + \frac{\sqrt{3}}{2} \sina{2}} & \frac{1}{3} & -\frac{2}{3d} \rb{ \frac{1}{2}\cosa{5} - \frac{\sqrt{3}}{2}\sina{5}} & \frac{2}{3dq_{4}} \rb{ \frac{1}{2} \sina{5} + \frac{\sqrt{3}}{2} \cosa{5} } & -\frac{1}{3}\\
        \frac{2}{3d}\rb{ \frac{1}{2}\cosa{2} + \frac{\sqrt{3}}{2} \sina{2}} & -\frac{2}{3d}\rb{ \frac{1}{2}\cosa{2} - \frac{\sqrt{3}}{2} \sina{2}} & \frac{1}{3} & -\frac{2}{3d} \rb{ \frac{1}{2}\cosa{5} + \frac{\sqrt{3}}{2}\sina{5}} & \frac{2}{3dq_{4}} \rb{ \frac{1}{2} \sina{5} - \frac{\sqrt{3}}{2} \cosa{5} } & -\frac{1}{3}\\
        \end{carray}.
\end{equation}
\end{figure*}
\section{Integrability of Thread-Like Actuators}~\label{sec:thread-like actuators}
This section shows that the conclusions drawn in Examples~\ref{example:finger} and~\ref{example: PCC underactuated} hold for any mechanical system driven by thread-like actuators. 
This type of actuation is growing in popularity because it allows creating lightweight structures with high power density, and precise and distributed actuation~\cite{palli2011modeling}. We first prove the existence of the actuation coordinates for chains of rigid bodies. We then extend such result to continuum bodies described by reduced-order models. 

Consider a mechanical system of rigid bodies with $n$-DOF actuated through $m$ inelastic tendons. As described in~\cite[Chap.~6]{murray1994mathematical}, one can always define $m$ extension functions $g_{i}(\qv);\ i \in \{ 1, \cdots, m \}$, that measure the tendons displacement as a function of $\qv$. The application of the principle of virtual works yields
\begin{equation*}
    \tauv_{\qv} = \Jm_{\gv}^{T\!}(\qv)\uv = \ActM\uv,
\end{equation*}
where $\uv \in \R^{m}$ collects the tendons tension. It immediately follows that the passive output $\dyv$ is integrable as $\yv = \gv(\qv)$.  

This result extends to mechanical systems with continuum bodies modeled under the Geometric Variable Strain (GVS) technique, see~\cite{renda2020geometric, boyer2020dynamics} for a detailed presentation of all the quantities defined in the following.
We denote the strain as $\xiv\in \R^{6}$, where $X \in [0, L]$ is the curvilinear abscissa with $L$ the body rest length. The GVS approach reduces the infinite-dimensional state of the system by assuming that $\xiv$ admits a representation of the form
\begin{equation}\label{eq:kinematics:functional strain}
    \xiv = \phiv(X, \qv), 
\end{equation}
where $\qv \in \R^{n}$ is the configuration vector, parameterizing the strain. Under~\eqref{eq:kinematics:functional strain}, the dynamic model of a continuum takes the form of~\eqref{eq:lagrangian dynamics}~\cite{armanini2023soft}.
% \begin{equation}
%     \Mm(\qv)\ddqv + \Cm(\qv, \dqv)\dqv + \Qm_{\mathrm{int}}(\qv, \dqv) + \Gm(\qv) = \tauv_{\qv}(\qv, \uv),
% \end{equation}
% where $\Mm(\qv) \in \R^{n \times n}$ is the inertia tensor, $\Cm(\qv, \dqv)\dqv \in \R^{n}$ models Coriolis and centrifugal effects, and $\Qm_{\mathrm{int}}(\qv, \dqv) \in \R^{n}$ and $\Gm(\qv)$ account for internal interaction and gravitational forces, respectively. 
For thread-like actuators, such as tendons and thin fluidic chambers, the generalized actuation force is $\tauv_{\qv} = \ActM\uv$ where 
%$\uv \in \R^{m}$ and
\begin{equation*}
    \ActM = \int_{0}^{L}\Jm^{T}_{\phiv} \Phim_{a}(X, \qv)\drm X,
\end{equation*}
and $\Phim_{a}(X, \qv) \in \R^{6 \times m}$ is the spatial actuation matrix, whose $i$-th column
\begin{equation}\label{eq:column i:spatial actuation matrix}
    (\Phim_{a}(X, \qv))_{i} = \begin{carray}{c}
        \tildv_{i}(X) \tv_{i}(X, \qv)\\
        \tv_{i}(X, \qv)
    \end{carray} \in \R^{6},
\end{equation}
represents the distributed force of the $i$-th actuator.
In the above expression, $\dv_{i}(X) \in \R^{3}$ and $\tv_{i}(X, \qv) \in \R^{3}$ are the actuator distance to the body backbone and its unit tangent vector~\cite{renda2022geometrically}, respectively. The latter can be computed as
\begin{equation*}
    \tv_{i}(X, \qv) = \frac{[\hxiv \dv_{i} + \dv^{'}_{i}]_{3}}{\norm{\hxiv \dv_{i} + \dv^{'}_{i}}},
\end{equation*}
where $\dv_{i}$ is expressed in homogeneous coordinates and $(\cdot)^{'} := \diff{(\cdot)}{X}$. 
Given $\tv_{i}(X, \qv)$ it is also possible to compute the length $L_{c_{i}}$ of the actuator as
\begin{equation}\label{eq:actuator length}
    L_{c_{i}}(\qv) = \int_{0}^{L} \tv^{T}_{i}(X, \qv) [ \hxiv \dv_{i}(X) + \dv^{'}_{i}(X) ]_{3} \drm X,
\end{equation}
or, after some manipulations, 
\begin{equation*}
    L_{c_{i}}(\qv) = \int_{0}^{L} (\Phim_{a}(X, \qv))_{i}^{T} \rb{ \xiv + \begin{carray}{c}
        \zerov_{3}\\
        \dv^{'}_{i}
    \end{carray} }\drm X.
\end{equation*}
The time derivative of~\eqref{eq:actuator length} is
\begin{equation*}
    \begin{split}
        \Dot{L}_{c_{i}}(\qv) &= \int_{0}^{L} \tv^{T}_{i}[\dot{\Hat{\xiv}} \dv_{i}]_{3} dX = \int_{0}^{L} (\Phim_{a}(X, \qv))^{T}_{i} \Jm_{\phiv}\drm X \dqv\\
        &= (\ActM)^{T}_{i}\dqv,
    \end{split}
\end{equation*}
which implies that the passive output $\dyv = \ActM[][T]\dqv$ is integrable as
\begin{equation*}
    \yv = \gv(\qv) = \begin{carray}{c}
        L_{c_{1}}(\qv)\\
        \vdots\\
        L_{c_{m}}(\qv)
    \end{carray} \in \R^{m}.
\end{equation*}
Thus, the actuation coordinates correspond to the length of the actuators, as for rigid systems.
The above results are independent of the number of DOF and actuators. 
In other words, finite-dimensional models of mechanical systems actuated via tendons always admit a collocated form, independently of being fully-, over-, or underactuated.
Recalling that the actuator coordinates are defined up to a constant, it is also possible to consider the actuator elongation $\delta L_{c_{i}} := L_{c_{i}} - L^{*}_{c_{i}};\ i \in \{ 1, \cdots, m\}$, with respect to a reference length $L^{*}_{c_{i}} \in \R$, such as that in the stress-free configuration. This way, proprioceptive sensors like encoders can easily measure the actuation coordinates. Consequently, the proposed change of coordinates is also helpful for control synthesis.

In Appendix~\ref{appendix:volumetric actuators}, we show that similar arguments apply to soft robots with volumetric actuators.

%
% \section{Application to Control of Soft Robots and Other Under-Actuated Mechanical Systems}~\label{sec:example}
\section{Control of Collocated Underactuated\\Mechanical Systems }~\label{sec:example}
The above results prove a fact empirically observed in soft and continuum robot control. In particular, several works~\cite{jones2006kinematics, braganza2007neural, falkenhahn2016dynamic, li2018design} have shown that it is possible to obtain excellent closed-loop performance in shape and position tasks by controlling the actuator length. This is the case for both model-based and model-free approaches. However, to the best of Authors knowledge, it has never been clarified why these coordinates represent a better choice than others, such as the curvature and bending direction. 
In the actuation coordinates, the dynamics is collocated, which is expected to simplify and robustify the closed loop, especially when the control law does not require significant system knowledge. When a controller is implemented in the actuation coordinates, explicit inversion of the actuation matrix is unnecessary because these coordinates inherently incorporate the inversion. It is also worth noting that any control problem formulated in the initial configuration space can be reformulated in the actuation coordinates.
Remarkably, the above considerations remain true also when the dynamics is underactuated. Furthermore, note that direct inversion of $\ActM$ is not possible in this case being $\ActM$ a tall matrix.
These results allow extending the controllers of~\cite{boyer2020dynamics, pustina2022feedback, borja2022energy, franco2021energy, caasenbrood2021energy, pustina2023psatid} for planar underactuated mechanical systems with damping to those moving in 3D. The following corollary formalizes such statement for the regulators of~\cite{borja2022energy} and~\cite{pustina2023psatid}. 
\begin{corollary}
    Consider an underactuated mechanical system satisfying the same hypotheses of Theorem~\ref{theorem:underactuated}. Suppose that there exists a dissipation function $\cal{F}(\qv, \dqv)$ such that, for all $\dqv \in T_{\qv}\mathcal{M}$,
    \begin{equation}\label{eq:damping assumption}
        \diff{\cal{F}(\qv, \dqv)}{\dqv} \dqv > 0,
    \end{equation}
    and, in the actuation coordinates, 
    \begin{equation}\label{eq:convexity assumption potential energy}
        \diff{^{2} \mathcal{L}_{\thetav}(\thetav, \zerov) }{\thetav^{2}_{u}} > 0.
    \end{equation}
    Let $\Km_{P}, \Km_{D}, \Km_{I}\in \R^{m \times m} > 0$ and $\gamma > 0$. There exist constants $\alpha > 0$ and $\Bar{\gamma} > 0$ such that, if $\Km_{P} > \alpha \IIm_{m}$ and $\gamma > \bar{\gamma}$, both the following regulators
    \begin{enumerate}
        \item PD+ (with feedforward)~\cite{borja2022energy}:
        \begin{equation}\label{eq:PD with feed-forward}
        \begin{split}
            \uv &= -\diffT{ \mathcal{L}_{\thetav}(\thetav_{ad}, \thetav_{ud}, \zerov, \zerov) }{\thetav_{a}}\\
            &\quad+ \Km_{P}( \thetav_{ad} - \thetav_{a} ) - \Km_{D} \dthetav_{a},
        \end{split}
        \end{equation}
        \item P-satI-D~\cite{pustina2023psatid}:
        \begin{equation}\label{eq:P-satI-D}
        \begin{split}
            \uv &= \Km_{P}\rb{ \thetav_{ad} - \thetav_{a} } - \Km_{D}\dthetav_{a}\\
            &\quad+ \frac{\Km_{I}}{\gamma}\int_{0}^{t} \boldsymbol{\mathrm{tanh}} \rb{ \thetav_{ad} - \thetav_{a} }(z)\drm z,
        \end{split}
        \end{equation}
    \end{enumerate}
    will globally asymptotically stabilize the closed-loop system at $(\thetav_{a} \,\, \thetav_{u} \,\, \dthetav_{a}\,\,\dthetav_{u}) = ( \thetav_{ad} \,\, \thetav_{ud} \,\, \zerov \,\, \zerov)$, where $\thetav_{ad} \in \R^{m}$ and $\thetav_{ud} \in \R^{n-m}$ is the unique solution to
    \begin{equation*}
        \diffT{ \mathcal{L}_{\thetav}(\thetav_{ad}, \thetav_{u}, \zerov, \zerov) }{\thetav_{u}} = \zerov.
    \end{equation*}
\end{corollary}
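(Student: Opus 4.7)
The plan is to reduce the corollary to the stability results of~\cite{borja2022energy, pustina2023psatid}, which were established for underactuated mechanical systems with a constant input matrix, by first transforming the dynamics into the collocated form provided by Theorem~\ref{theorem:underactuated}.

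First, I would apply the change of coordinates $\thetav = \hv(\qv)$ of~\eqref{eq:change coordinates underactuated} to bring the Euler--Lagrange equations into the form~\eqref{eq:coordinate change:underactuated eom}. In the actuation coordinates the input matrix is literally the constant selector $[\IIm_{m}\ \zerom_{(n-m)\times m}]^{T}$, so the transformed dynamics is structurally identical to those handled in the cited references. Property~\ref{property:change of coordinates} ensures that a valid $\mathcal{L}_{\thetav}$ is produced, and the Rayleigh function transforms to a well-defined $\mathcal{F}_{\thetav}$ through the relation $\dqv = \Jm_{\hv}^{-1}(\qv)\dthetav$.

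Second, I would verify that the two assumptions~\eqref{eq:damping assumption} and~\eqref{eq:convexity assumption potential energy} imply the structural hypotheses required by the two cited proofs. Positivity of the dissipation is preserved because the coordinate change is a diffeomorphism, so the positive pairing in~\eqref{eq:damping assumption} immediately translates into the analogous inequality for $\mathcal{F}_{\thetav}$ and $\dthetav$. The inertia matrix transforms by congruence with $\Jm_{\hv}^{-1}$, so positive definiteness is preserved. Condition~\eqref{eq:convexity assumption potential energy} is already stated in the actuation coordinates and, via the implicit function theorem, secures existence and uniqueness of $\thetav_{ud}$ as the equilibrium of the passive subsystem $\partial_{\thetav_{u}}\mathcal{L}_{\thetav}(\thetav_{ad},\thetav_{u},\zerov,\zerov)=\zerov$.

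With the transformed system satisfying all the hypotheses of~\cite{borja2022energy} for the PD+ law~\eqref{eq:PD with feed-forward} and of~\cite{pustina2023psatid} for the P-satI-D law~\eqref{eq:P-satI-D}, I would invoke those stability theorems verbatim in the $\thetav$-coordinates, inheriting the constants $\alpha$ and $\bar{\gamma}$ and concluding global asymptotic stability of $(\thetav_{ad},\thetav_{ud},\zerov,\zerov)$. The main obstacle I anticipate is the careful bookkeeping of regularity and properness of the Lyapunov candidates after the change of coordinates---particularly properness along the unactuated directions $\thetav_{u}$---but since $\hv$ is a diffeomorphism on the relevant neighborhood and the two stability results are formulated on the same Lagrangian template that~\eqref{eq:coordinate change:underactuated eom} realizes exactly, this reduction is expected to be routine rather than conceptually new.
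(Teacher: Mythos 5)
Your proposal is correct and follows essentially the same route the paper intends: the corollary is justified precisely by applying the change of coordinates of Theorem~\ref{theorem:underactuated} to obtain the constant selector input matrix $[\IIm_{m}\ \zerom_{(n-m)\times m}]^{T}$, after which the stability theorems of~\cite{borja2022energy} and~\cite{pustina2023psatid} apply verbatim under the transformed damping condition~\eqref{eq:damping assumption} and the convexity condition~\eqref{eq:convexity assumption potential energy} (which, as you note, fixes $\thetav_{ud}$ uniquely via the implicit function theorem). The paper in fact leaves this reduction implicit, so your write-up adds the bookkeeping the authors omit.
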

Under~\eqref{eq:convexity assumption potential energy}, for any value of the actuated coordinates $\thetav_a$, there is a unique equilibrium of those unactuated, i.e., the system equilibria are uniquely determined by $\thetav_{a}$. Instead,~\eqref{eq:damping assumption} guarantees internal stability of the closed-loop system.
Note that the above controllers admit more general structures, see~\cite{borja2022energy} and~\cite{pustina2023psatid}. In addition, despite being developed for continuum soft robots, these apply to any underactuated mechanical system with damping on the unactuated variables.

We exploit the previous corollary to perform a shape regulation task for a continuum soft robot moving in 3D.  
The robot has rest length $L = \SI{0.4}{[\meter]}$ and cross section radius $R \in [0.02;\ 0.008]~\si{[\meter]}$, which varies linearly from the base to the tip. The mass density is $\rho = \SI{680}{[\kilogram \per \cubic \meter]}$. Furthermore, we consider a linear visco-elastic stress-strain curve with Young modulus $E = \num{8.88e5} \si{[\newton \per \square \meter]}$, Poisson ratio $P = 0.5$ and material damping $D = \num{1e4}\si{[\newton \per \square \meter \second]}$. Eight tendons actuate the robot. The first six have an oblique routing and are displaced $60^{\circ}$ each. Their initial distance from the center line is $\SI{0.0016}{[\meter]}$. Three of these run from the base to half of the robot, while the remaining ones up to the tip. The last two tendons have an helical routing with pitch $\frac{0.4}{2 \pi}\si{[\meter]}$ and are displaced $180^{\circ}$, with a distance from the backbone of $\SI{0.006}{[\meter]}$. The strain is modeled as 
\begin{equation*}
    \xiv(X, \qv) = \underbrace{ \begin{carray}{cc}
        \Sigmam^{-1}\Phim_{a}(X, \qv^{*}) & \Phim_{u}(X)
    \end{carray}}_{\Phim_{\xiv}(X)} \qv + \xiv^{*},
\end{equation*}
where $\qv \in \R^{15}$, $\qv^{*} = \zerov_{15}$ and $\xiv^{*} = \left(0 \,\, 0 \,\, 1 \,\, 0 \,\, 0 \,\, 0 \right)^{T}$ denote the stress-free configuration and strain, respectively, $\Sigmam(X) \in \R^{6 \times 6}$ is the positive definite body stiffness matrix and the columns of $\Phim_{a} \in \R^{6 \times 8}$ are defined as in~\eqref{eq:column i:spatial actuation matrix}. The strain basis $\Sigmam^{-1}\Phim_{a}(X, \qv^{*})$ has proven to accurately describe the deformations due to the actuation forces~\cite{renda2022geometrically}. Instead, 
\begin{equation*}\small
\Phim_{u}(X) = \!\!\!\begin{carray}{ccccccc}
    1 & P_{1}(X) & P_{2}(X) & 0 & 0 & 0 & 0\\
    0 & 0 & 0  & 1 & P_{1}(X) & P_{2}(X) & 0\\
    0 & 0 & 0 & 0 & 0 & 0 & 1\\
    \multicolumn{7}{c}{\zerov_{3 \times 7}}
\end{carray},%\! \in \! \R^{6 \times 7},
\end{equation*}
with $P_{1}(X) := \displaystyle 2\frac{X}{L}-1$ and $P_{2}(X) := \displaystyle 6\rb{\frac{X}{L}}^2-6\frac{X}{L}+1$, encodes three Legendre polynomials modeling the angular deformations due to the gravitational field, not captured by $\Sigmam^{-1}\Phim_{a}$. Since $n = 15$ and $r = m = 8$, the system is underactuated. Note that only shape regulation tasks can be achieved in general. We compare~\eqref{eq:P-satI-D} with the PD+ regulator in $\qv$ space of~\cite{della2021model} 
\begin{equation}\label{eq:PD with feedforward q}
\begin{split}
    \uv &= -\ActM[][\dagger][(\qv_{d})]\diffT{ \mathcal{L}_{\qv}(\qv_{d}, \zerov) }{\qv}\\
    &\quad+ \ActM[][T]\left[ k_{\Prm}\rb{ \qv_{d} - \qv } - k_{\Drm}\dqv \right],
\end{split}
\end{equation}
Due to $\ActM$, the above control law guarantees only local asymptotic stability~\cite{della2021model, ortega2013passivity}, and it requires information of the entire state of the robot to be implemented. The control gains of~\eqref{eq:P-satI-D} and~\eqref{eq:PD with feedforward q} are $\Km_{P} = k_{P}\IIm_{8}$, $\Km_{D} = k_{D}\IIm_{8}$, $\Km_{I} = k_{I}\IIm_{8}$ and $\gamma = 1$, with $k_{P} = 2.5 \times 10^{3}~\si{[\newton \per \meter]}$, $k_{D} =  10~\si{[\newton \second \per \meter]}$ and $k_{I} = 2 \times 10^{3}~\si{[\newton \per \meter \second]}$. %
Because of the underactuation, only the configurations satisfying the equilibrium equation
\begin{equation*}\label{eq:equilibrium equation}
    \diffT{ \mathcal{L}_{\qv}(\qv_{eq}, \zerov) }{\qv} = \ActM[][][(\qv_{eq})] \uv,
\end{equation*}
with $\uv \in \R^{8}$, can be controlled. We command the three desired shapes given in~\eqref{eq:simulation1:reference} as step references spaced in time by $\SI{2}{[\second]}$.
\begin{figure*}[!t]
\begin{equation}\label{eq:simulation1:reference}\small
% %Reference for simulation 1 (11 DOF)
% \begin{array}{l}
%     \qv_{d, 1} = \begin{carray}{ccccccccccc}
%         -8.41 & \hspace{2pt}-2.44 & -4.09 & -26.21 & -23.38 & -26.43 & -1.73 & -1.79 & \hphantom{+}0.21 & \hphantom{+}0.68 & -1.50
%     \end{carray},\\
%     \qv_{d, 2} = \begin{carray}{ccccccccccc}
%         -12.85 & -6.65 & -15.37 & -15.03 & -14.18 & -14.64 & -0.35 &  -0.62 &  0.21  & \hphantom{+}0.44 & -1.71
%     \end{carray},\\
%     \qv_{d, 3} = \begin{carray}{ccccccccccc}
%         -13.12 & -5.50 & -26.39 & -22.79 & -19.77 &  -20.56 &  -0.77 &  -0.97 &  -0.38 & -2.19  &  1.50
%     \end{carray}.
% \end{array}
%Reference for simulation 4 (15 DOF)
\begin{array}{l}
    \qv_{d, 1} = \begin{carray}{ccccccccccccccc}
        \!\!\!-8.85 &\!\!\! \,\,\,-4.70 &\!\!\! \,\,\, -1.39 &\!\!\!\,\,\, -26.30 &\!\!\! -23.41 &\!\!\! -26.33 &\!\!\! -1.74 &\!\!\! -1.79 &\!\!\! 0.19 &\!\!\!\,\,\,\,\, -0.08 &\!\!\! -0.09 &\!\!\! 0.70 &\!\!\!\,\,\,\, -0.68 &\!\!\! -0.05 &\!\!\! -1.55\!\!\!
    \end{carray},\\
    \qv_{d, 2} = \begin{carray}{ccccccccccccccc}
        \!\!\!-13.21 & \!\!\!-11.74 &\!\!\!
   -9.91 &\!\!\! \,\,\,
  -15.40 &\!\!\!
  -14.20 &\!\!\!
  -14.25 &\!\!\!
   -0.26 &\!\!\!
   -0.72 &\!\!\!
    0.09 &\!\!\!\,\,\,\,\,
   -0.35 &\!\!\!
   -0.11 &\!\!\!
    0.20 &\!\!\!\,\,\,\,
   -2.08 &\!\!\!
   -0.75 &\!\!\!
   -1.75\!\!\!
    \end{carray},\\
    \qv_{d, 3} = \begin{carray}{ccccccccccccccc}
          \!\!\! -12.99 &\!\!\!
  -11.52 &\!\!\!
  -20.49 &\!\!\!
  -22.94 &\!\!\!
  -19.77 &\!\!\!
  -20.40 &\!\!\!
   -0.86 &\!\!\!
   -0.87 &\!\!\!
   -0.55 &\!\!\!
   -0.50 &\!\!\!
   -0.43 &\!\!\!
   -2.01 &\!\!\!
   -1.33 &\!\!\!
    0.10 &\!\!\!\,\,\,\,
    1.51 \,\,\,\,
    \end{carray}.
\end{array}
\end{equation}
\end{figure*}
Furthermore, $\qv_{d, i}$ is converted into a desired tendon displacement $\thetav_{ad,i}(t) = \yv_{d, i}(t)$; $ i = 1, 2, 3$, for~\eqref{eq:P-satI-D}. 
The robot starts from the straight (stress-free) configuration at rest, and the simulation runs for $\SI{6}{[\second]}$. In the following simulations, $\yv$ has been computed through numerical integration because it was impossible to derive its closed form expression. On the other hand, in a experimental setup equipped with motor encoders $\yv$ could have been directly evaluated or obtained from the available measurements.
% %Compact plots
% Figs.~\ref{fig:simulation 1:q}--\ref{fig:simulation 1 dummy:q} and~\ref{fig:simulation 1:y}--\ref{fig:simulation 1 dummy:y} 
%Expanded plots
Figures~\ref{fig:simulation 1:y1}--\ref{fig:simulation 1:y3} and~\ref{fig:simulation 1 dummy:q1}--\ref{fig:simulation 1 dummy:q3} 
show the evolution of the actuation coordinates and the configuration variables under~\eqref{eq:P-satI-D} and~\eqref{eq:PD with feedforward q}, respectively, for three sub-intervals of length $1~[\si{\second}]$. As expected, the P-satI-D regulates the actuation coordinates to the desired set point. On the other hand, the PD+ in $\qv$ space fails this task, always showing a steady state error. However, the closed-loop system remains stable. The control action for the two closed-loop systems is reported in Fig.~\ref{fig:control action}. Note that the controllers outputs are quite different. 
% Compared to the PD+ regulator, the P-satI-D has slower transients.
% Compact plots
% , as evident also from the control action in Figs.~\ref{fig:simulation 1:u}--\ref{fig:simulation 1 dummy:u}. 
% This is expected because the P-satI-D learns, through the integral action, the feedforward command zeroing the tracking error. 
%
Finally, Fig.~\ref{fig:simulation 1:strobo} presents a photo sequence of the two closed-loop systems. The end-effector reaches the correct position only under the P-satI-D. Indeed, the average norm of the steady-state Cartesian error is $5.8\cdot 10^{-5}[\si{\meter}]$ for the P-satI-D and, respectively, $1.2\cdot 10^{-2}[\si{\meter}]$ for the PD+ regulator.
% %Compact plots
% \begin{figure*}[!ht]
%     \centering
%     \subfigure[{P-satI-D in $\yv$ space}]{
%         \includegraphics[width = 0.31\textwidth]{}
%         \label{fig:simulation 1:y}
%     }\hfill
%     \subfigure[{P-satI-D in $\yv$ space}]{
%         \includegraphics[width = 0.31\textwidth]{}
%         \label{fig:simulation 1:q}
%     }\hfill
%     \subfigure[{P-satI-D in $\yv$ space}]{
%         \includegraphics[width = 0.31\textwidth]{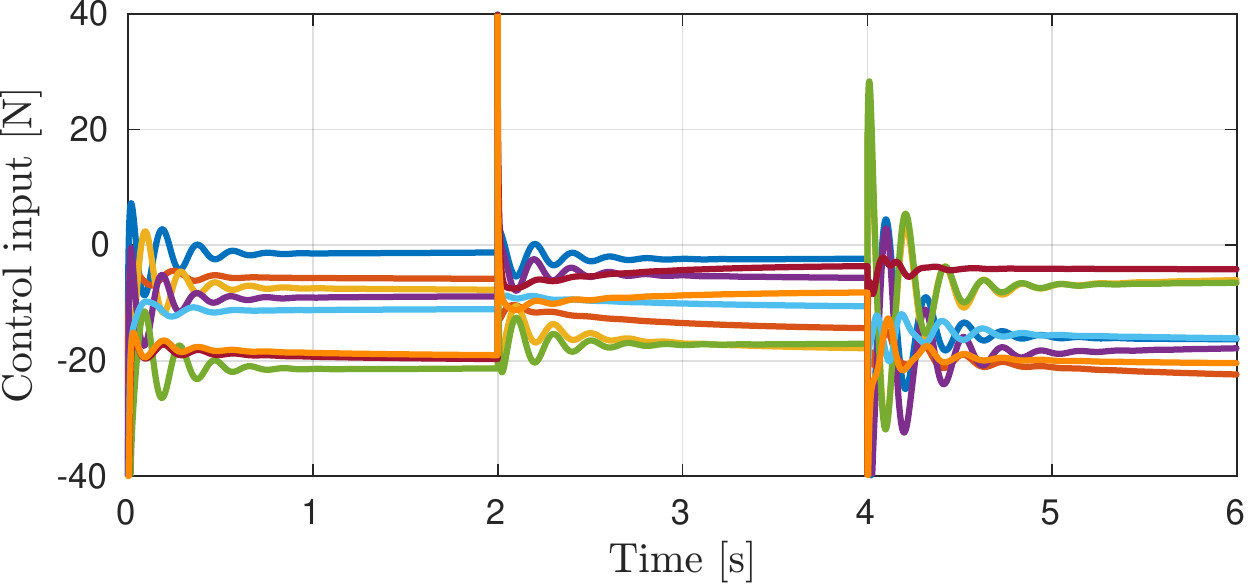}
%         \label{fig:simulation 1:u}
%     }\\
%     \subfigure[{PD+ in $\qv$ space}]{
%         \includegraphics[width = 0.31\textwidth]{fig/simulation4_dummy_y.pdf}
%         \label{fig:simulation 1 dummy:y}
%     }\hfill
%     \subfigure[{PD+ in $\qv$ space}]{
%         \includegraphics[width = 0.31\textwidth]{fig/simulation4_dummy_q.pdf}
%         \label{fig:simulation 1 dummy:q}
%     }\hfill
%     \subfigure[{PD+ in $\qv$ space}]{
%         \includegraphics[width = 0.31\textwidth]{fig/simulation4_dummy_u.pdf}
%         \label{fig:simulation 1 dummy:u}
%     }
%     \caption{Time evolutions of (a)--(d)~actuator elongations, (b)--(e) configuration variables, and~(c)--(f) control actions under the P-satI-D~\eqref{eq:P-satI-D} in $\yv$ space and the PD+~\eqref{eq:PD with feedforward q} in $\qv$ space, respectively. The P-satI-D regulates the actuation coordinates to the desired target. On the other hand, the closed-loop system under the PD+ regulator is stable but has a steady-state error.}
%     \label{fig:simulation 1:results}
% \end{figure*}
% %
%Control action
\begin{figure}


    \centering
    %Plot the legend
    \subfigure[{}]{
        \includegraphics[width = 0.9\columnwidth, trim={14cm 19cm 14cm 0cm}, clip]{fig/simulation4_u_legend.pdf}
    }\\\setcounter{subfigure}{0}\vspace{-0.7cm}
    %True plots
    \subfigure[{P-satI-D in $\yv$ space}]{
        \includegraphics[width=0.9\columnwidth]{fig/simulation4_u.pdf}
    }\\
    \subfigure[{PD+ in $\qv$ space}]{
        \includegraphics[width=0.9\columnwidth]{fig/simulation4_dummy_u.pdf}
    }
    \caption{Time evolution of the control inputs under the P-satI-D~\eqref{eq:P-satI-D} in $\yv$ space and the PD+~\eqref{eq:PD with feedforward q} in $\qv$ space, respectively.}
    \label{fig:control action}
\end{figure}
%Expanded plots
\begin{figure*}[!ht]
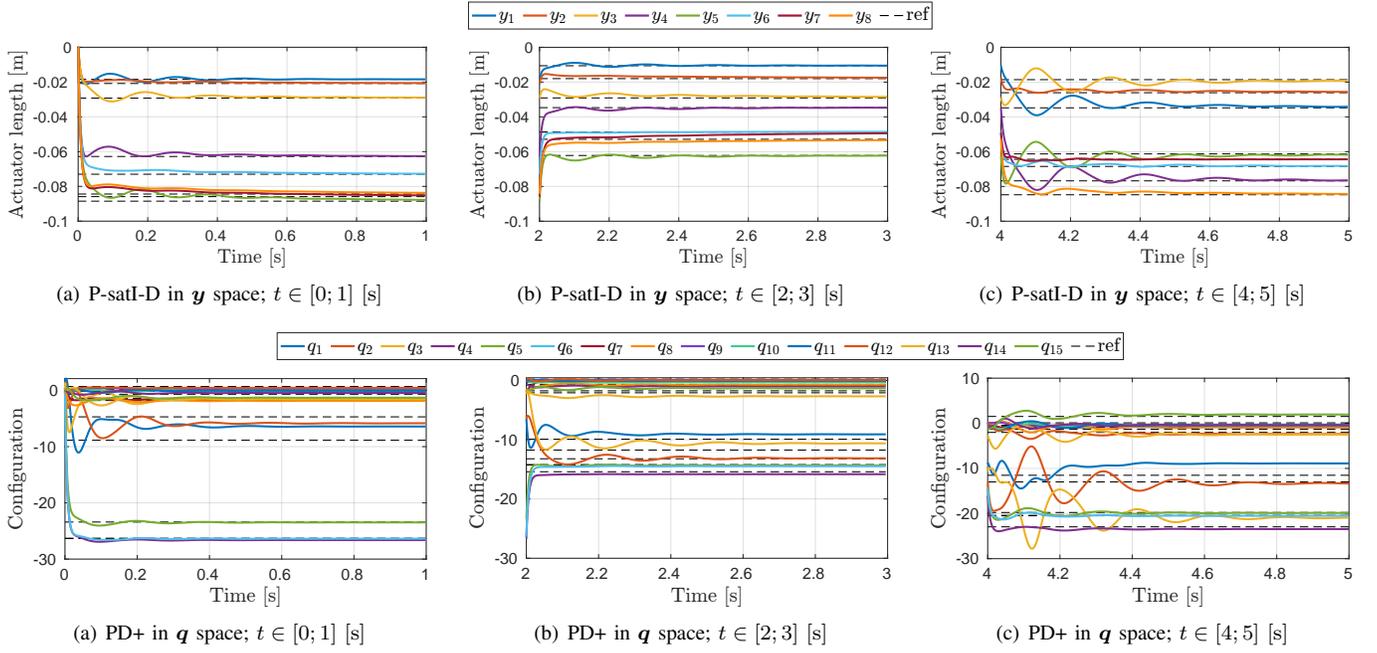

    \centering
    %Plot the legend
    \subfigure[{}]{
        \includegraphics[width = \textwidth, trim={0cm 19cm 0cm 0cm}, clip]{fig/simulation4_y_legend.pdf}
    }\\\setcounter{subfigure}{0}\vspace{-0.7cm}
    %True plots
    \subfigure[{P-satI-D in $\yv$ space; $t \in [0; 1]~[\si{\second}]$}]{
        \includegraphics[width = 0.31\textwidth]{fig/simulation4_y_1.pdf}
        \label{fig:simulation 1:y1}
    }\hfill
    \subfigure[{P-satI-D in $\yv$ space; $t \in [2; 3]~[\si{\second}]$}]{
        \includegraphics[width = 0.31\textwidth]{fig/simulation4_y_2.pdf}
        \label{fig:simulation 1:y2}
    }\hfill
    \subfigure[{P-satI-D in $\yv$ space; $t \in [4; 5]~[\si{\second}]$}]{
        \includegraphics[width = 0.31\textwidth]{fig/simulation4_y_3.pdf}
        \label{fig:simulation 1:y3}
    }\hfill
    %Plot the legend
    \subfigure[{}]{
        \includegraphics[width = \textwidth, trim={0cm 19cm 0cm 0cm}, clip]{fig/simulation4_dummy_q_legend.pdf}
    }\\\setcounter{subfigure}{0}\vspace{-0.7cm}
    %Make the trye plots
    \subfigure[{PD+ in $\qv$ space; $t \in [0; 1]~[\si{\second}]$}]{
        \includegraphics[width = 0.31\textwidth]{fig/simulation4_dummy_q_1.pdf}
        \label{fig:simulation 1 dummy:q1}
    }\hfill
    \subfigure[{PD+ in $\qv$ space; $t \in [2; 3]~[\si{\second}]$}]{
        \includegraphics[width = 0.31\textwidth]{fig/simulation4_dummy_q_2.pdf}
        \label{fig:simulation 1 dummy:q2}
    }\hfill
    \subfigure[{PD+ in $\qv$ space; $t \in [4; 5]~[\si{\second}]$}]{
        \includegraphics[width = 0.31\textwidth]{fig/simulation4_dummy_q_3.pdf}
        \label{fig:simulation 1 dummy:q3}
    }
    \caption{Time evolutions of (a)--(c) actuator elongations and (d)--(f) configuration variables under the P-satI-D~\eqref{eq:P-satI-D} in $\yv$ space and the PD+~\eqref{eq:PD with feedforward q} in $\qv$ space, respectively. The P-satI-D regulates the actuation coordinates to the desired target. On the other hand, the closed-loop system under the PD+ regulator is stable but has a steady-state error.}
    \label{fig:simulation 1:results}
\end{figure*}
%
%Strobo plots
\begin{figure*}[t!]
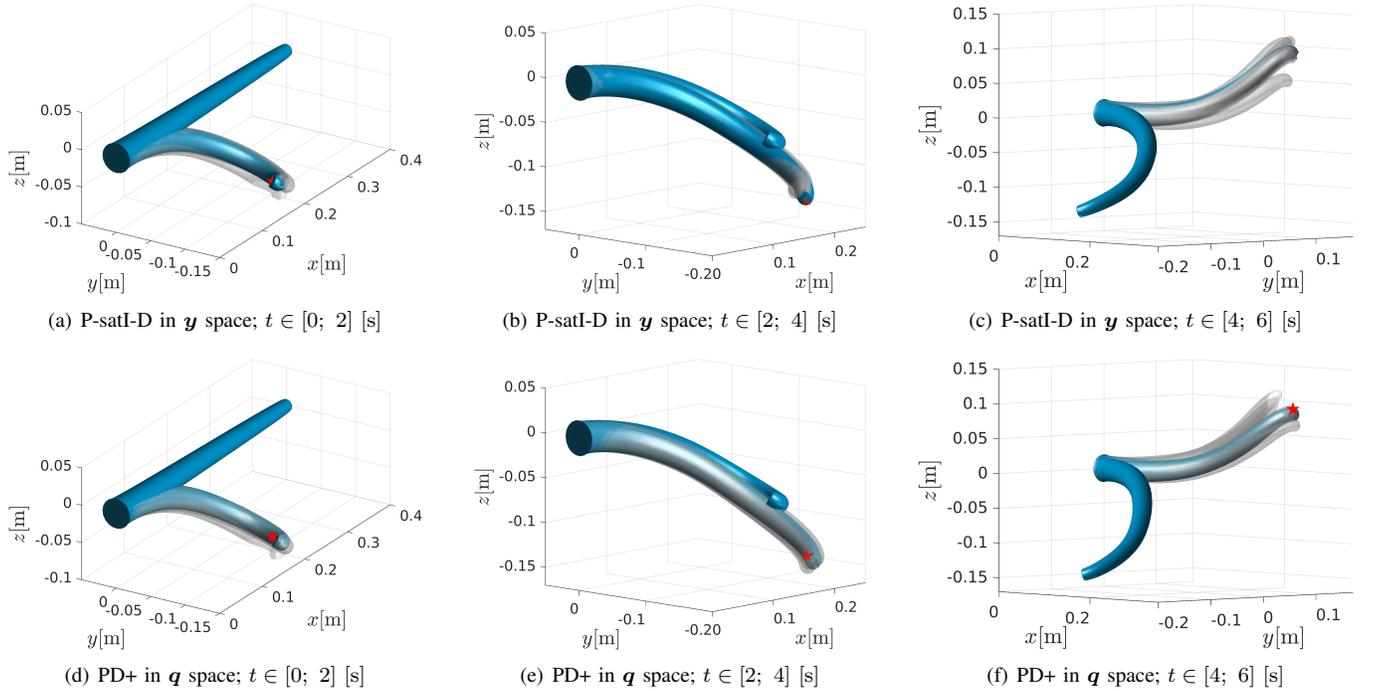

    \centering
    \subfigure[{P-satI-D in $\yv$ space; $t \in [0;\ 2]~\si{[\second]}$}]{
        \includegraphics[height = 110px]
        {fig/simulation4_strobo1}
        \label{fig:simulation 1:strobo 1}
    }\hfill
    \subfigure[{P-satI-D in $\yv$ space; $t \in [2;\ 4]~\si{[\second]}$}]{
        \includegraphics[height = 110px]{fig/simulation4_strobo2}
        \label{fig:simulation 1:strobo 2}
    }\hfill
    \subfigure[{P-satI-D in $\yv$ space; $t \in [4;\ 6]~\si{[\second]}$}]{
        \includegraphics[height = 110px]{fig/simulation4_strobo3}
        \label{fig:simulation 1:strobo 3}
    }
    \subfigure[{PD+ in $\qv$ space; $t \in [0;\ 2]~\si{[\second]}$}]{
        \includegraphics[height = 110px]{fig/simulation4_dummy_strobo1}
        \label{fig:simulation dummy 1:strobo 1}
    }\hfill
    \subfigure[{PD+ in $\qv$ space; $t \in [2;\ 4]~\si{[\second]}$}]{
        \includegraphics[height = 110px]{fig/simulation4_dummy_strobo2}
        \label{fig:simulation dummy 1:strobo 2}
    }\hfill
    \subfigure[{PD+ in $\qv$ space; $t \in [4;\ 6]~\si{[\second]}$}]{
        \includegraphics[height = 110px]{fig/simulation4_dummy_strobo3}
        \label{fig:simulation dummy 1:strobo 3}
    }
    \caption{Frame sequences of robot motion, divided in three time windows. Figs.~(a)--(c) and (d)--(f) show the robot when controlled in the $\yv$ or $\qv$ coordinates using the laws~\eqref{eq:P-satI-D} and~\eqref{eq:PD with feedforward q}, respectively. 
    The initial and final configurations of each interval are shown in blue, while light gray shapes represent intermediate configurations. A red star indicates the constant target position of the end-effector.}
    \label{fig:simulation 1:strobo}
\end{figure*}
\section{Conclusions}~\label{sec:conclusions}
This article has considered the input decoupling problem for Lagrangian systems. We have shown that there exists a class of Lagrangian dynamics, called \textit{collocated}, for which a coordinate transformation decouples actuator inputs entering the equations of motion through a configuration-dependent actuation matrix. These coordinates have a physical interpretation and can be easily computed. 
Under mild conditions on the differentiability of the actuation matrix, a simple test allows verifying if the dynamics is collocated or not. 
As a consequence of power invariance, the results equally apply to fully actuated, overactuated and underactuated systems. In case of underactuated dynamics, inputs are collocated with the actuation coordinates, while some freedom in left in the definition of the unactuated coordinates.
Since we consider only coordinate transformations, the proposed method differs from the standard differential geometric approach used for controlling input-affine nonlinear systems, which typically makes use of complete feedback transformations.
As a byproduct of our approach, we have shown that all mechanical systems driven by thread-like actuators are collocated. Moreover, we were able to extend control laws recently developed for underactuated systems with constant actuation matrix to collocated mechanical systems with damping.

Future work will be devoted to the experimental validation of the proposed method and to special choices of the unactuated variables that further simplify the equations of motion, ease the check of conditions for obtaining input-state or input-output exact linearization via feedback, or even reveal the existence of flat outputs for the system. Additionally, we will consider relaxing the integrability hypothesis at the cost of transforming the input.

%%%%%%%%%%%%%%%%%%%%%%%%%%%%%%%%%%%%%%%%%%%%%%%%%%%%%%%%%%%%%%%%%%%%%%%%%%%%%%%%
\appendix
\subsection{Properties of Lagrangian systems}\label{appendix:properties}
We recall two important properties of Lagrangian systems that play a key role in deriving the results in this paper. 
\begin{property}\label{property:power}
Let $\mathcal{H}_{\qv}(\qv, \dqv) := \diff{\mathcal{L}_{\qv}(\qv, \dqv)}{\dqv} \dqv - \mathcal{L}_{\qv}(\qv, \dqv)$ be the system Hamiltonian. For all $\qv \in \mathcal{M}, \dqv \in T_{\qv}\mathcal{M}$ and $\uv \in \R^{m}$, it holds
\begin{equation}\label{eq:input power}
    \Dot{\mathcal{H}}_{\qv}(\qv, \dqv) = \dqv^{T}\tauv_{\qv}(\qv, \uv).
\end{equation}
\end{property}
Equation~\eqref{eq:input power} states that the time rate of change of the Hamiltonian, i.e., the system total energy, equals the input power. It also follows from~\eqref{eq:lagrangian dynamics} and~\eqref{eq:input power} that the dynamics is passive with respect to the pair $(\uv, \dyv) = \rb{\uv, \ActM[][T] \dqv}$ with storage function $\mathcal{H}_{\qv}(\qv, \dqv)$.
\begin{property}\label{property:change of coordinates}
    If $\thetav : \mathcal{B}(\qv) \rightarrow \mathcal{N} = \hv(\qv)$ is a (local) diffeomorphism, with $\Jm_{\hv}(\qv) = \displaystyle \diff{\hv}{\qv}$, then
    \begin{align}\label{eq:euler lagrange eqns invariance}
         \frac{\mathrm{d}}{\mathrm{d}t}\diffT{\mathcal{L}_{\thetav}(\thetav, \dthetav)}{\dthetav} - \diffT{ \mathcal{L}_{\thetav}(\thetav, \dthetav) }{\thetav} = \tauv_{\thetav}(\thetav, \uv),
    \end{align}
    where 
    $$\mathcal{L}_{\thetav}(\thetav, \dthetav) = \mathcal{L}_{\qv}(\qv = \hv^{-1}(\thetav), \dqv = \Jm^{-1}_{\hv} \dthetav),$$
    and
    $$\tauv_{\thetav}(\thetav, \uv) = \Jm^{-T}_{\hv}\tauv_{\qv}(\qv = \hv^{-1}(\thetav), \uv).$$ 
    This also implies that, for all $\thetav \in \mathcal{N}, \dthetav \in T_{\thetav}\mathcal{N}, \qv \in \mathcal{M}, \dqv \in T_{\qv}\mathcal{M}$ and $\uv \in \R^{m}$, it holds
    \begin{equation}\label{eq:power invariance}
        \Dot{\mathcal{H}}_{\thetav}(\thetav, \dthetav) = \dthetav^{T}\tauv_{\thetav}(\thetav, \uv) = \dqv^{T}\tauv_{\qv}(\qv, \uv) = \Dot{\mathcal{H}}_{\qv}(\qv, \dqv),
    \end{equation}
    being $\mathcal{H}_{\thetav}(\thetav, \dthetav) = \mathcal{H}_{\qv}(\qv = \hv^{-1}(\thetav), \dqv = \Jm^{-1}_{\hv} \dthetav)$.
\end{property}
According to the above property, the Euler-Lagrange equations and the power are invariant, i.e., they do not depend on the choice of coordinates representing the dynamics. 
\subsection{Alternative proof of Theorem~\ref{theorem:underactuated}}\label{appendix:alternative proof}
We provide an alternative proof of the \textit{if} part of Theorem~\ref{theorem:underactuated}. Similar considerations also hold for Theorem~\ref{theorem:fully actuated} and Corollary~\ref{corollary:overactuated}.  
\begin{proof}
From Property~\ref{property:change of coordinates}, we have
\begin{equation}\label{eq:tau theta identity 2}
    \tauv_{\thetav} = \Jm^{-T}_{\hv}\ActM[][][(\qv = \hv^{-1}(\thetav))]\uv.
\end{equation}
By exploiting the block triangular structure of $\Jm_{\hv}(\qv)$ it follows
\begin{equation*}
    \Jm^{-T}_{\hv}(\qv) = \begin{carray}{cc}
        \ActM[a][-1] & \zerom_{m \times (n-m)}\\
        -\ActM[u][][]\ActM[a][-1] & \IIm_{n-m}
    \end{carray},
\end{equation*}
which yields
\begin{equation*}
\begin{split}
    \Jm^{-T}_{\hv}\ActM[][][] &= \begin{carray}{cc}
        \ActM[a][-1] & \zerom_{m \times (n-m)}\\
        -\ActM[u][][]\ActM[a][-1] & \IIm_{n-m}
    \end{carray}\begin{carray}{c}
        \ActM[a][]\\\ActM[u][]
    \end{carray}\\
    &= \begin{carray}{c}
        \IIm_{m}\\
        \zerom_{(n-m) \times m}
    \end{carray}.
\end{split}
\end{equation*}
\end{proof}
%%%%%%%%%%%%%%%%%%%%%%%%%%%%%%%%%%%%%%%%%%%%%%%%%%%%%
\subsection{Integrability of volumetric actuators}\label{appendix:volumetric actuators}
Following arguments similar to those of~\cite{stolzle2021piston}, it is possible to extend the results of Sec.~\ref{sec:thread-like actuators} to robotic systems with volumetric actuators~\cite{su2022pneumatic}. 

Let $V_{i}^{*}$ be the volume of the $i$-th actuator inside the robot when in the reference configuration. Similarly, denote with $V_{i}(\qv)$ the volume in the current deformed configuration. The work performed by the actuator on the robot is 
\begin{equation*}
    W_{u_i} = \rb{V_{i}(\qv) - V_{i}^{*}}u_{i}.
\end{equation*}
To determine the effect of $u_{i}$ on the generalized coordinates it is possible to apply the principle of virtual works
\begin{equation*}
    \delta W_{(\tauv_{\qv})_{i}} = \delta \qv^{T} (\tau_{\qv})_{i} = \delta \qv^{T} \rb{\frac{\partial V_{i}(\qv)}{\partial \qv}}^{T} u_{i} = \delta W_{u_i},
\end{equation*}
obtaining
\begin{equation*}
    \ActM[i][] = \rb{\frac{\partial V_{i}(\qv)}{\partial \qv}}^{T}. 
\end{equation*}
The above equations imply that $\dyv_{i} = \ActM[i][T]\dqv$ is integrable and the corresponding actuation coordinate can be chosen as $\delta V_{i} := V_{i}(\qv) - V_{i}^{*}$, which is the volume variation in the actuator chamber.  
%%%%%%%%%%%%%%%%%%%%%%%%%%%%%%%%%%%%%%%%%%%%%%%%%%%%%%%%%%%%%%%%%%%%%%%%%%%%%%%%
%Bibliography
\bibliographystyle{IEEEtran}
\bibliography{./references}

% Generated by IEEEtran.bst, version: 1.14 (2015/08/26)
\begin{thebibliography}{10}
\providecommand{\url}[1]{#1}
\csname url@samestyle\endcsname
\providecommand{\newblock}{\relax}
\providecommand{\bibinfo}[2]{#2}
\providecommand{\BIBentrySTDinterwordspacing}{\spaceskip=0pt\relax}
\providecommand{\BIBentryALTinterwordstretchfactor}{4}
\providecommand{\BIBentryALTinterwordspacing}{\spaceskip=\fontdimen2\font plus
\BIBentryALTinterwordstretchfactor\fontdimen3\font minus
  \fontdimen4\font\relax}
\providecommand{\BIBforeignlanguage}[2]{{%
\expandafter\ifx\csname l@#1\endcsname\relax
\typeout{** WARNING: IEEEtran.bst: No hyphenation pattern has been}%
\typeout{** loaded for the language `#1'. Using the pattern for}%
\typeout{** the default language instead.}%
\else
\language=\csname l@#1\endcsname
\fi
#2}}
\providecommand{\BIBdecl}{\relax}
\BIBdecl

\bibitem{chen2010distributed}
G.~Chen and F.~L. Lewis, ``Distributed adaptive tracking control for
  synchronization of unknown networked {L}agrangian systems,'' \emph{IEEE
  Trans. on Systems, Man, and Cybernetics}, vol.~41, no.~3, pp. 805--816, 2010.

\bibitem{ortega2013passivity}
R.~Ortega, J.~A.~L. Perez, P.~J. Nicklasson, and H.~J. Sira-Ramirez,
  \emph{Passivity-based {C}ontrol of {E}uler-{L}agrange {S}ystems:
  {M}echanical, {E}lectrical and {E}lectromechanical {A}pplications}.\hskip 1em
  plus 0.5em minus 0.4em\relax Springer, 2013.

\bibitem{loria2015observers}
A.~Lor{\'\i}a, ``Observers are unnecessary for output-feedback control of
  {L}agrangian systems,'' \emph{IEEE Trans. on Automatic Control}, vol.~61,
  no.~4, pp. 905--920, 2015.

\bibitem{ge2019hierarchical}
M.-F. Ge, Z.-W. Liu, G.~Wen, X.~Yu, and T.~Huang, ``Hierarchical
  controller-estimator for coordination of networked {E}uler--{L}agrange
  systems,'' \emph{IEEE Trans. on Cybernetics}, vol.~50, no.~6, pp. 2450--2461,
  2019.

\bibitem{sun2020distributed}
Y.~Sun, D.~Dong, H.~Qin, and W.~Wang, ``Distributed tracking control for
  multiple {E}uler--{L}agrange systems with communication delays and input
  saturation,'' \emph{ISA Trans.}, vol.~96, pp. 245--254, 2020.

\bibitem{giordano2019coordinated}
A.~M. Giordano, C.~Ott, and A.~Albu-Schäffer, ``Coordinated control of
  spacecraft's attitude and end-effector for space robots,'' \emph{IEEE
  Robotics and Automation Lett.}, vol.~4, no.~2, pp. 2108--2115, 2019.

\bibitem{yi2020path}
B.~Yi, R.~Ortega, I.~R. Manchester, and H.~Siguerdidjane, ``Path following of a
  class of underactuated mechanical systems via immersion and invariance-based
  orbital stabilization,'' \emph{Int. J. of Robust and Nonlinear Control},
  vol.~30, no.~18, pp. 8521--8544, 2020.

\bibitem{mengacci2021motion}
R.~Mengacci, F.~Angelini, M.~G. Catalano, G.~Grioli, A.~Bicchi, and
  M.~Garabini, ``On the motion/stiffness decoupling property of articulated
  soft robots with application to model-free torque iterative learning
  control,'' \emph{Int. J. of Robotics Research}, vol.~40, no.~1, pp. 348--374,
  2021.

\bibitem{keppler2022underactuation}
M.~Keppler, C.~Ott, and A.~Albu-Sch{\"a}ffer, ``From underactuation to
  quasi-full actuation: Aiming at a unifying control framework for articulated
  soft robots,'' \emph{Int. J. of Robust and Nonlinear Control}, vol.~32,
  no.~9, pp. 5453--5484, 2022.

\bibitem{borja2022energy}
P.~Borja, A.~Dabiri, and C.~Della~Santina, ``Energy-based shape regulation of
  soft robots with unactuated dynamics dominated by elasticity,'' in
  \emph{Proc. 5th IEEE Int. Conf. on Soft Robotics}, 2022, pp. 396--402.

\bibitem{zhang2016multivariable}
W.~Zhang, Y.~Wang, Y.~Liu, and W.~Zhang, ``Multivariable disturbance
  observer-based ${H}_2$ analytical decoupling control design for multivariable
  systems,'' \emph{Int. J. of Systems Science}, vol.~47, no.~1, pp. 179--193,
  2016.

\bibitem{krener1973equivalence}
A.~J. Krener, ``On the equivalence of control systems and the linearization of
  nonlinear systems,'' \emph{{SIAM} J. on Control}, vol.~11, no.~4, pp.
  670--676, 1973.

\bibitem{brockett1978feedback}
R.~W. Brockett, ``Feedback invariants for nonlinear systems,'' \emph{{IFAC}
  Proceedings}, vol.~11, no.~1, pp. 1115--1120, 1978.

\bibitem{isidori1981nonlinear}
A.~Isidori, A.~Krener, C.~Gori-Giorgi, and S.~Monaco, ``Nonlinear decoupling
  via feedback: {A} differential geometric approach,'' \emph{IEEE Trans. on
  Automatic Control}, vol.~26, no.~2, pp. 331--345, 1981.

\bibitem{nijmeijer1990nonlinear}
H.~Nijmeijer and A.~Van~der Schaft, \emph{{N}onlinear {D}ynamical {C}ontrol
  {S}ystems}.\hskip 1em plus 0.5em minus 0.4em\relax Springer, 1990.

\bibitem{isidori1992nonlinear}
A.~Isidori, \emph{Nonlinear control systems: an introduction}.\hskip 1em plus
  0.5em minus 0.4em\relax Springer, 1992.

\bibitem{ortega2021pid}
R.~Ortega, J.~G. Romero, P.~Borja, and A.~Donaire, \emph{{PID} Passivity-Based
  Control of Nonlinear Systems with Applications}.\hskip 1em plus 0.5em minus
  0.4em\relax John Wiley \& Sons, 2021.

\bibitem{wu2019adaptive}
J.~Wu and Y.~Lu, ``Adaptive backstepping sliding mode control for boost
  converter with constant power load,'' \emph{IEEE Access}, vol.~7, pp.
  50\,797--50\,807, 2019.

\bibitem{lazrak2018improved}
A.~Lazrak and A.~Abbou, ``An improved control strategy for {DFIG} wind turbine
  to ride-through voltage dips,'' in \emph{Proc. 6th Int. Renewable and
  Sustainable Energy Conf.}, 2018, pp. 1--6.

\bibitem{mishra2018design}
R.~N. Mishra and K.~B. Mohanty, ``Design and implementation of a feedback
  linearization controlled {IM} drive via simplified neuro-fuzzy approach,''
  \emph{IETE J. of Research}, vol.~64, no.~2, pp. 209--230, 2018.

\bibitem{ding2022robust}
X.~Ding, J.~Liang, S.~Lu, F.~Kong, and Y.~Chen, ``Robust back-stepping sliding
  mode control for {LCL}-type grid-connected inverters in weak grids,''
  \emph{J. of Power Electronics}, vol.~23, pp. 758--768, 2022.

\bibitem{skogestad2023transformed}
S.~Skogestad, C.~Zotic{\u{a}}, and N.~Alsop, ``Transformed inputs for
  linearization, decoupling and feedforward control,'' \emph{J. of Process
  Control}, vol. 122, pp. 113--133, 2023.

\bibitem{franco2021energy}
E.~Franco and A.~Garriga-Casanovas, ``Energy-shaping control of soft continuum
  manipulators with in-plane disturbances,'' \emph{Int. J. of Robotics
  Research}, vol.~40, no.~1, pp. 236--255, 2021.

\bibitem{caasenbrood2021energy}
B.~Caasenbrood, A.~Pogromsky, and H.~Nijmeijer, ``Energy-based control for soft
  manipulators using {C}osserat-beam models,'' in \emph{Proc. 18th Int. Conf.
  on Informatics in Control, Automation and Robotics}, 2021, pp. 311--319.

\bibitem{pustina2022feedback}
P.~Pustina, C.~Della~Santina, and A.~De~Luca, ``Feedback regulation of
  elastically decoupled underactuated soft robots,'' \emph{IEEE Robotics and
  Automation Lett.}, vol.~7, no.~2, pp. 4512--4519, 2022.

\bibitem{pustina2023psatid}
P.~Pustina, P.~Borja, C.~Della~Santina, and A.~De~Luca, ``P-sat{I}-{D} shape
  regulation of soft robots,'' \emph{IEEE Robotics and Automation Lett.},
  vol.~8, no.~1, pp. 1--8, 2023.

\bibitem{soleti2023energy}
G.~Soleti, J.~Prechtl, P.~R. Massenio, M.~Baltes, and G.~Rizzello, ``Energy
  based control of a bi-stable and underactuated soft robotic system based on
  dielectric elastomer actuators,'' in \emph{Proc. 22th {IFAC} World Congr.},
  2023.

\bibitem{deluca2021flexible}
A.~De~Luca, ``Flexible robots,'' in \emph{Encyclopedia of Systems and Control},
  2nd~ed., J.~Baillieul and T.~Samad, Eds.\hskip 1em plus 0.5em minus
  0.4em\relax Springer, 2021, pp. 814--822.

\bibitem{tsolakis2021distributed}
A.~Tsolakis and T.~Keviczky, ``Distributed {IDA-PBC} for a class of
  nonholonomic mechanical systems,'' \emph{IFAC-PapersOnLine}, vol.~54, no.~14,
  pp. 275--280, 2021.

\bibitem{do1998differential}
M.~P. do~Carmo, \emph{{D}ifferential {F}orms and {A}pplications}.\hskip 1em
  plus 0.5em minus 0.4em\relax Springer, 1998.

\bibitem{deluca1998steering}
A.~De~Luca, R.~Mattone, and G.~Oriolo, ``Steering a class of redundant
  mechanisms through end-effector generalized forces,'' \emph{IEEE transactions
  on Robotics and Automation}, vol.~14, no.~2, pp. 329--335, 1998.

\bibitem{murray1994mathematical}
R.~M. Murray, Z.~Li, and S.~S. Sastry, \emph{{A} {M}athematical {I}ntroduction
  to {R}obotic {M}anipulation}.\hskip 1em plus 0.5em minus 0.4em\relax CRC
  Press, 1994.

\bibitem{goldstein1980classical}
H.~Goldstein, \emph{Classical mechanics}.\hskip 1em plus 0.5em minus
  0.4em\relax Addison Wesley, 1980.

\bibitem{della2020improved}
C.~Della~Santina, A.~Bicchi, and D.~Rus, ``On an improved state parametrization
  for soft robots with piecewise constant curvature and its use in model based
  control,'' \emph{IEEE Robotics and Automation Lett.}, vol.~5, no.~2, pp.
  1001--1008, 2020.

\bibitem{palli2011modeling}
G.~Palli, G.~Borghesan, and C.~Melchiorri, ``Modeling, identification, and
  control of tendon-based actuation systems,'' \emph{IEEE Trans. on Robotics},
  vol.~28, no.~2, pp. 277--290, 2011.

\bibitem{renda2020geometric}
F.~Renda, C.~Armanini, V.~Lebastard, F.~Candelier, and F.~Boyer, ``A geometric
  variable-strain approach for static modeling of soft manipulators with tendon
  and fluidic actuation,'' \emph{IEEE Robotics and Automation Lett.}, vol.~5,
  no.~3, pp. 4006--4013, 2020.

\bibitem{boyer2020dynamics}
F.~Boyer, V.~Lebastard, F.~Candelier, and F.~Renda, ``Dynamics of continuum and
  soft robots: {A} strain parameterization based approach,'' \emph{IEEE Trans.
  on Robotics}, vol.~37, no.~3, pp. 847--863, 2020.

\bibitem{armanini2023soft}
C.~Armanini, F.~Boyer, A.~T. Mathew, C.~Duriez, and F.~Renda, ``Soft robots
  modeling: {A} structured overview,'' \emph{IEEE Trans. on Robotics}, vol.~39,
  no.~3, pp. 1728--1748, 2023.

\bibitem{renda2022geometrically}
F.~Renda, C.~Armanini, A.~Mathew, and F.~Boyer, ``Geometrically-exact inverse
  kinematic control of soft manipulators with general threadlike actuators’
  routing,'' \emph{IEEE Robotics and Automation Lett.}, vol.~7, no.~3, pp.
  7311--7318, 2022.

\bibitem{jones2006kinematics}
B.~A. Jones and I.~D. Walker, ``Kinematics for multisection continuum robots,''
  \emph{IEEE Trans. on Robotics}, vol.~22, no.~1, pp. 43--55, 2006.

\bibitem{braganza2007neural}
D.~Braganza, D.~M. Dawson, I.~D. Walker, and N.~Nath, ``A neural network
  controller for continuum robots,'' \emph{IEEE Trans. on Robotics}, vol.~23,
  no.~6, pp. 1270--1277, 2007.

\bibitem{falkenhahn2016dynamic}
V.~Falkenhahn, A.~Hildebrandt, R.~Neumann, and O.~Sawodny, ``Dynamic control of
  the bionic handling assistant,'' \emph{IEEE/ASME Trans. on Mechatronics},
  vol.~22, no.~1, pp. 6--17, 2016.

\bibitem{li2018design}
M.~Li, R.~Kang, S.~Geng, and E.~Guglielmino, ``Design and control of a
  tendon-driven continuum robot,'' \emph{Trans. of the Institute of Measurement
  and Control}, vol.~40, no.~11, pp. 3263--3272, 2018.

\bibitem{della2021model}
C.~Della~Santina, C.~Duriez, and D.~Rus, ``Model based control of soft robots:
  {A} survey of the state of the art and open challenges,'' \emph{IEEE Control
  Systems Mag.}, vol.~43, no.~3, pp. 30--65, 2023.

\bibitem{stolzle2021piston}
M.~St{\"o}lzle and C.~Della~Santina, ``Piston-driven pneumatically-actuated
  soft robots: {M}odeling and backstepping control,'' \emph{IEEE Control
  Systems Lett.}, vol.~6, pp. 1837--1842, 2021.

\bibitem{su2022pneumatic}
H.~Su, X.~Hou, X.~Zhang, W.~Qi, S.~Cai, X.~Xiong, and J.~Guo, ``Pneumatic soft
  robots: {C}hallenges and benefits,'' in \emph{Actuators}, vol.~11,
  no.~3.\hskip 1em plus 0.5em minus 0.4em\relax MDPI, 2022, pp. 1--26.

\end{thebibliography}
%PP
\vskip -2\baselineskip plus -1fil
\begin{IEEEbiography}[{\includegraphics[width=1in,height=1.25in,clip,keepaspectratio]{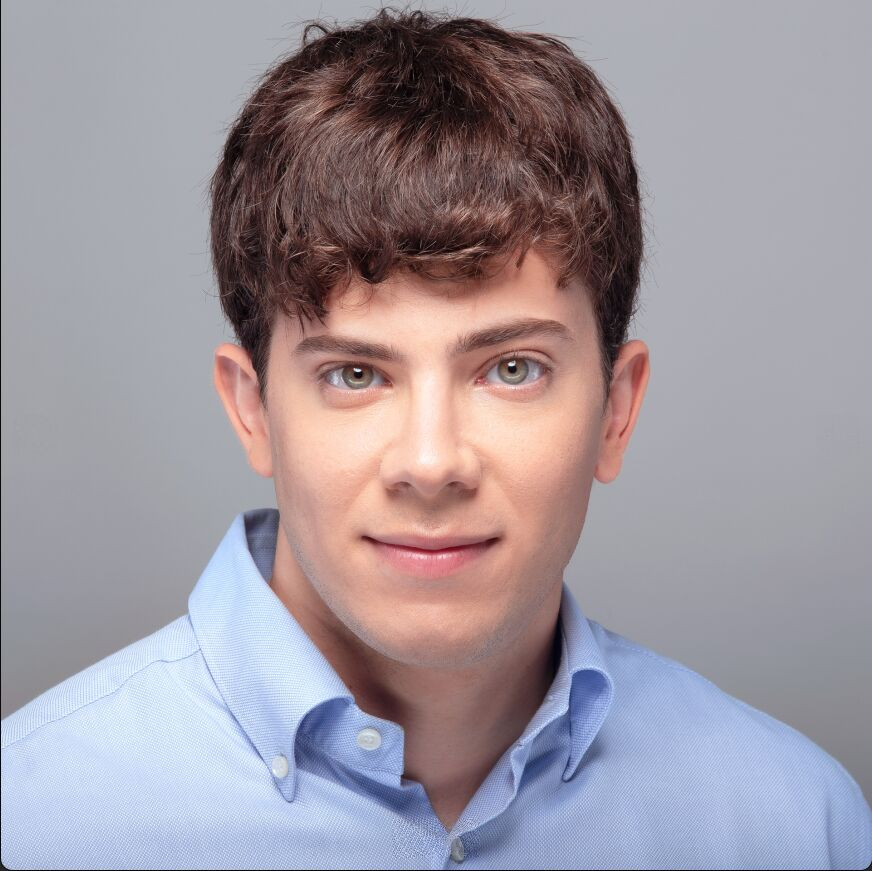}}]{Pietro Pustina} (Student Member, IEEE) 
received a M.~Sc.~in Control Engineering from Sapienza University of Rome, Italy, in 2021.
Since November 2021, he has been working towards a Ph.D. in Automatic Control at Sapienza University of Rome, Italy. His research interests include modeling and control of continuum soft robots. 
\end{IEEEbiography}
\vskip -2\baselineskip plus -1fil
%CDS
\begin{IEEEbiography}[{\includegraphics[width=1in,height=1.25in,clip,keepaspectratio]{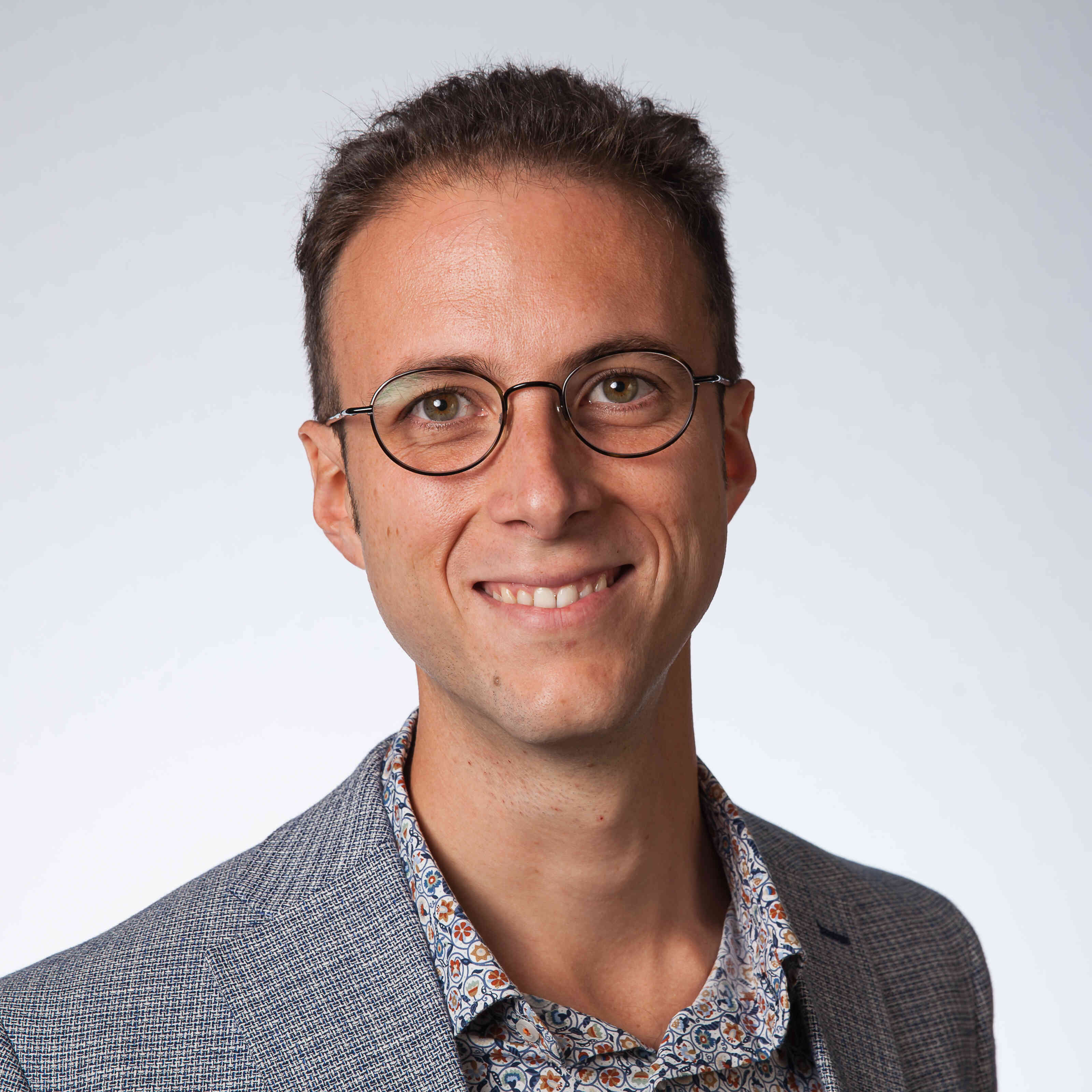}}]{Cosimo Della Santina} (Senior Member, IEEE) 
is an Assistant Professor at TU Delft and a Research Scientist at the German Aerospace Institute (DLR). He received his Ph.D. in robotics (cum laude, 2019) from the University of Pisa. He was a visiting Ph.D. student and a postdoc (2017 to 2019) at the Computer Science and Artificial Intelligence Laboratory, Massachusetts Institute of Technology (MIT). He was a senior postdoc (2020) and guest lecturer (2021) at the Department of Informatics, Technical University of Munich (TUM). Cosimo has received several awards, including the euRobotics Georges Giralt Ph.D. Award (2020) and the IEEE RAS Early Academic Career Award (2023). He is PI for TU Delft of the European Projects Natural Intelligence and EMERGE, co-director of the Delft AI Lab SELF, and involved in several Dutch projects. His research interest is in providing motor intelligence to physical systems, focusing on elastic and soft robots.
\end{IEEEbiography}
%FB
\vskip -2\baselineskip plus -1fil
\begin{IEEEbiography}[{\includegraphics[width=1in,height=1.25in,clip,keepaspectratio]{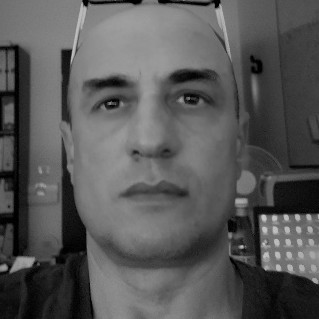}}]{Frédéric Boyer} (Member, IEEE) 
was born in France in 1967. He received the Diploma degree in mechanical engineering from the Institut Nationale Polytechnique de Grenoble, Grenoble, France, in 1991, the Master of Research degree in mechanics from the University of Grenoble, Grenoble, in 1991, and the Ph.D. degree in robotics from the University of Paris VI, Paris, France, in 1994. He is currently a Professor with the Department of Automatic Control, Institut Mines Telecom Atlantique, Nantes, France, and co-leader of the Bio-Robotics Team, Laboratoire des Sciences du Numérique de Nantes. He has coordinated several research projects including one European FP7-FET project on a reconfigurable eel-like robot able to navigate with electric sense. His current research interests include continuum and soft robotics, geometric mechanics, and biorobotics. Dr. Boyer is the recipient of the Monpetit Prize from the Academy of Science of Paris in 2007 for his work in dynamics and the French “La Recherche Prize” in 2014 for his works on artificial electric sense.
\end{IEEEbiography}
%ADL
\vskip -2\baselineskip plus -1fil
\begin{IEEEbiography}[{\includegraphics[width=1in,height=1.25in,clip,keepaspectratio]{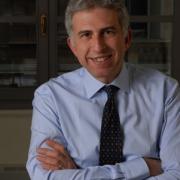}}]{Alessandro De Luca} (Life Fellow, IEEE)
is Professor of Robotics and Automation at Sapienza University of Rome. He has been the first Editor-in-Chief of the IEEE Transactions on Robotics (2004-08), RAS Vice-President for Publication Activities in 2012-13, General Chair of ICRA 2007, and Program Chair of ICRA 2016. He received three conference awards (Best paper at ICRA 1998 and BioRob 2012, Best application paper at IROS 2008), the Helmholtz Humboldt Research Award in 2005, the IEEE-RAS Distinguished Service Award in 2009, and the IEEE George Saridis Leadership Award in Robotics and Automation in 2019. He is an IEEE Life Fellow since 2023. His research interests cover modeling, motion planning, and control of robotic systems (flexible manipulators, kinematically redundant arms, underactuated robots, wheeled mobile robots), as well as physical human-robot interaction. He was the scientific coordinator of the FP7 project SAPHARI – Safe and Autonomous Physical Human-Aware Robot Interaction (2011-15).
\end{IEEEbiography}
%FR
\vskip -2\baselineskip plus -1fil
\begin{IEEEbiography}[{\includegraphics[width=1in,height=1.25in,clip,keepaspectratio]{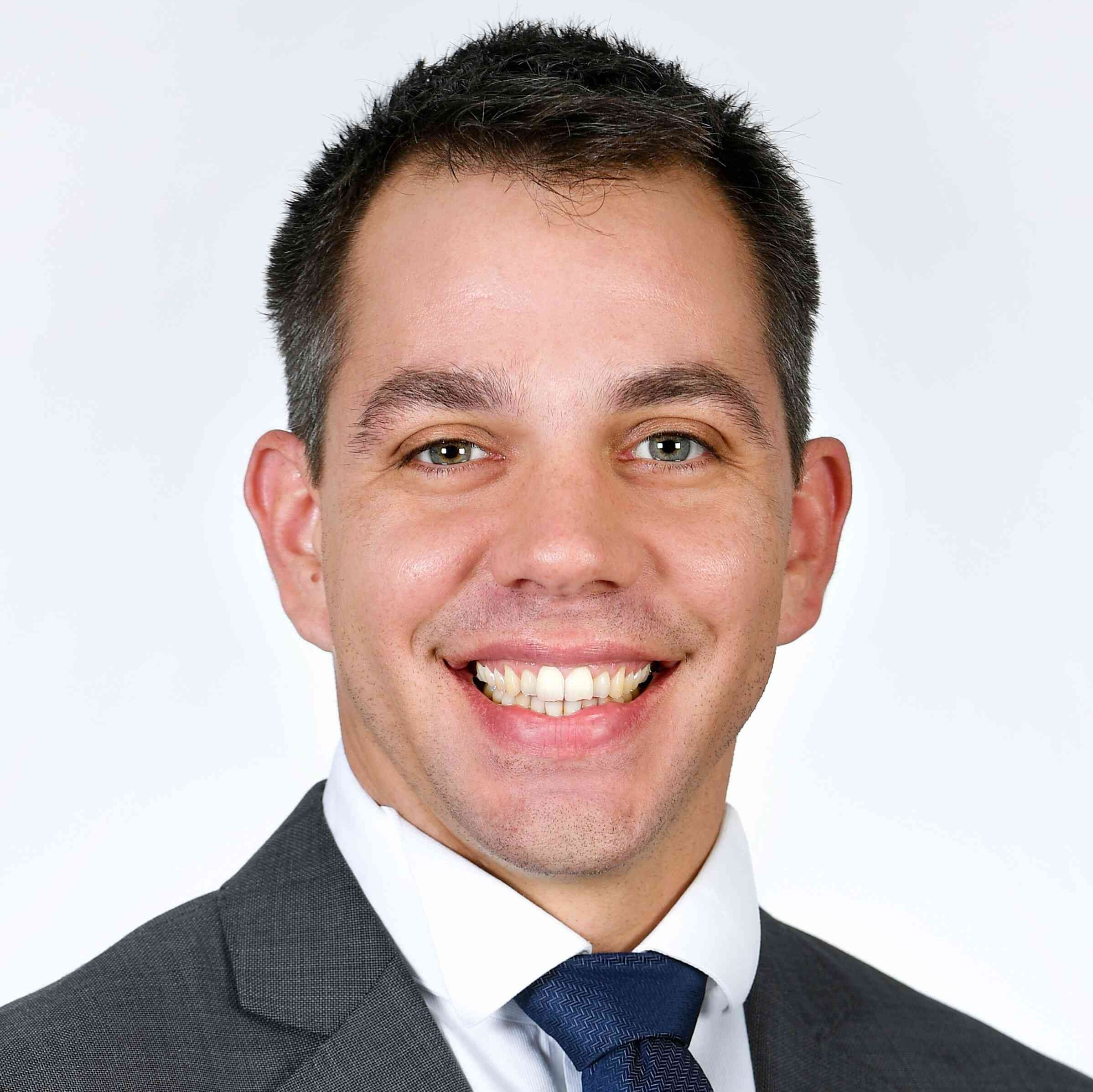}}]{Federico Renda} (Member, IEEE)
is an Associate Professor in the Department of Mechanical and Nuclear Engineering at Khalifa University, Abu Dhabi, UAE. He obtained his B.Sc. and M.Sc. degrees in biomedical engineering from the University of Pisa, Italy, in 2007 and 2009, respectively. In 2014, he earned his Ph.D. in Bio-robotics from the Biorobotics Institute, Scuola Superiore Sant’Anna, Pisa. Dr. Renda's research focuses on dynamic modeling and control of soft and continuum robots using principles of geometric mechanics.
\end{IEEEbiography}
\end{document}